\def\eqref#1{equation~\ref{#1}}
\def\1{\bm{1}}
\DeclareMathAlphabet{\mathsfit}{\encodingdefault}{\sfdefault}{m}{sl}
\SetMathAlphabet{\mathsfit}{bold}{\encodingdefault}{\sfdefault}{bx}{n}
\DeclareMathOperator*{\argmin}{arg\,min}
\newcommand{\NF}{\textsf{NF}}
\newcommand{\mirNF}{\textsf{MNF}}
\newcommand{\mirN}{N^{\textsf{mir}}}
\newcommand{\MNF}{\textsf{MNF}}
\newcommand{\Glocal}{\mathcal{G}_{\text{local}}}
\newtheorem{theorem}{Theorem}
\newtheorem{lemma}{Lemma}[section]
\newtheorem{definition}{Definition}[section]
\newtheorem{fact}{Fact}[section]
\newcommand\numberthis{\addtocounter{equation}{1}\tag{\theequation}} 
\title{A Learning Theoretic Perspective on Local Explainability}
\author{Jeffrey Li\thanks{Denotes equal contribution} \\  University of Washington\\
\texttt{jwl2162@cs.washington.edu} \\
\And
Vaishnavh Nagarajan$^*$, Gregory Plumb \\
Carnegie Mellon University \\
\texttt{vaishnavh@cs.cmu.edu}
\\
\AND
Ameet Talwalkar \\
Carnegie Mellon University \& Determined AI \\
}
\begin{document}

\maketitle

\begin{abstract} 

In this paper, we explore connections between interpretable machine learning and learning theory
through the lens of 
\textit{local approximation} explanations.
First, we tackle the traditional problem of \textit{performance generalization} and bound the test-time accuracy of a model using a notion of how locally explainable it is.  
Second, we explore the novel problem of \textit{explanation generalization} which is an important concern for a growing class of \textit{finite sample-based} local approximation explanations.
Finally, we validate our theoretical results empirically and show that they reflect what can be seen in practice. 

\end{abstract}

\section{Introduction}

There has been a growing interest in interpretable machine learning, which seeks to help people understand their models. While interpretable machine learning encompasses a wide range of problems, it is a fairly uncontroversial hypothesis that there exists a trade-off between a \textit{model's complexity} and \textit{general notions of interpretability}. This hypothesis suggests a seemingly natural connection to the field of learning theory, which has thoroughly explored relationships between a \textit{function class's complexity} and \textit{generalization}. However, formal connections between interpretability and learning theory remain relatively unstudied.   

Though there are several notions of conveying interpretability, one common and flexible approach is to use local approximations. 
Formally, \textit{local approximation explanations} (which we will refer to as ``local explanations'') provide insight into a model's behavior as follows: for any black-box model $f \in \mathcal{F}$ and input $x$, the explanation system produces a simple function, $g_x(x') \in \Glocal$, which approximates $f$ in a specified neighborhood, $x' \sim N_x$. 
Crucially, the freedom to specify both $\Glocal$ and $N_x$ grants local explanations great versatility. 
In this paper, we provide two connections between learning theory and how well $f$ can be approximated locally (i.e. the \textit{fidelity} of local explanations).

Our first result studies the standard problem of \textit{performance generalization} by relating test-time performance to a notion of local interpretability. As it turns out, our focus on local explanations leads us to unique tools and insights from a learning theory point of view.
Our second result identifies and addresses an unstudied -- yet important -- question regarding \textit{explanation generalization}.
This question pertains to a growing class of explanation systems, such as MAPLE \citep{Plumb2018MAPLE} and RL-LIM \citep{Yoon2019}, which we call \textit{finite sample-based} local explanations\footnote{%
This terminology is not to be confused with example-based explanations where the explanation itself is in the form of data instances rather than a function.}.  
These methods learn their local approximations using a common finite sample drawn from $D$ (in contrast to local approximation methods such as LIME \citep{Ribeiro2016}) and, as a result, run the risk of overfitting to this finite sample.
In light of this, we answer the following question: for these explanation-learning systems, how well do they generalize to data not seen during training?

We address these questions with two bounds, which we outline now. 
Regarding performance generalization, we derive our first main result, Theorem \ref{thm:performance-generalization}, which bounds the expected test mean squared error (MSE) of any $f$ in terms of its MSE over the $m$ samples in the training set, $S=\{(x_i,y_i)\}_{i=1}^{m}$:
\begin{center}
\scalebox{0.90}{$
\underbrace{\vphantom{\frac{1}{m}\sum_{i=1}^m (f(x_i)-y_i)^2}
\mathbb{E}_{(x,y)\sim D}[(f(x)-y)^2]}_{\text{Test MSE}
} \leq  
\tilde{\mathcal{O}}\Big( \underbrace{\frac{1}{m}\sum_{i=1}^m (f(x_i)-y_i)^2}_{ \text{Train MSE}} + 
\underbrace{\vphantom{\frac{1}{m}\sum_{i=1}^m (f(x_i)-y_i)^2}
\mathbb{E}_{\stackunder[0pt]{$\scriptstyle{x\sim D},$}{$\scriptstyle{x' \sim N_{x}}$}}\left[ (g_{x'}(x)-f(x))^2 \right]}_{\text{Interpretability Term ($\mirNF$)}} 
+ 
\underbrace{\vphantom{\frac{1}{m}\sum_{i=1}^m (f(x_i)-y_i)^2}
\rho_S \hat{\mathcal{R}}_S(\Glocal)}_{\text{Complexity Term}} \Big)
$
}
\end{center}

Regarding explanation generalization for finite sample-based explanation-learning systems, we apply a similar proof technique to obtain Theorem 2, which bounds the quality of the system's explanations on unseen data in terms of their quality on the data on which the system was trained:
\begin{center}
\scalebox{0.9}{
$
\underbrace{\vphantom{\sum_{i=1}^{m} \mathbb{E}_{x' \sim N_{x}} \left[ (f(x_i)-g_{x'}(x_i))^2\right]}
\mathbb{E}_{\stackunder[0pt]{$\scriptstyle{x\sim D},$}{$\scriptstyle{x' \sim N_{x}}$}}\left[ (g_{x'}(x)-f(x))^2 \right]}_{\text{Test } \mirNF } 
\leq 
\underbrace{\frac{1}{m}\sum_{i=1}^{m}  \mathbb{E}_{x' \sim N_{x_i}} \left[ (f(x_i)-g_{x'}(x_i))^2\right]}_{ \text{Train } \mirNF} + 
\underbrace{\vphantom{\sum_{i=1}^{m} \mathbb{E}_{x' \sim N_{x}} \left[ (f(x_i)-g_{x'}(x_i))^2\right]}
\tilde{\mathcal{O}} \left (\rho_S \hat{\mathcal{R}}_S(\Glocal)\right)}_{\text{Complexity Term}}
$
}
\end{center}

Before summarizing our contributions, we discuss the key new terms and their relationship.
\begin{itemize}[leftmargin=0.2in]
    \item \textit{Interpretability terms:} 
    The terms involving $\mirNF$ correspond to Mirrored Neighborhood Fidelity, a metric we use to measure local explanation quality.
    As we discuss in Section \ref{sec:MNF}, this is a reasonable modification of the commonly used Neighborhood Fidelity ($\NF$) metric \citep{Ribeiro2016, Plumb2018MAPLE}.  
    Intuitively, we generally expect $\mirNF$ to be larger when the neighborhood sizes are larger since the $g_{x'}$ are required to extrapolate farther.
    \item \textit{Complexity term:} 
    This term measures the complexity of the local explanation system $g$ in terms of (a) the complexity of the local explanation class $\Glocal$ and (b) $\rho_S$, a quantity that we define and refer to as \emph{neighborhood disjointedness factor}. 
    As we discuss in Section \ref{sec:performance}, $\rho_S$ is a value in $[1,\sqrt{m}]$ (where $m=|S|$) that is proportional to the level of disjointedness of the neighborhoods for points in the sample $S$. 
    Intuitively, we expect $\rho_S$ to be larger when the neighborhoods sizes are smaller since smaller neighborhoods will overlap less. 
\end{itemize}

Notably, both our bounds capture the following key trade-off: as neighborhood widths increase, $\mirNF$  increases but $\rho_S$ decreases. As such, our bounds are non-trivial only if $N_x$ can be chosen such that $\mirNF$ remains small but $\rho_S$ grows slower than  $\tilde{\mathcal{O}}(\sqrt{m})$ (since  $\hat{\mathcal{R}}_S(\Glocal)$ decays as  $\tilde{\mathcal{O}}(1/\sqrt{m})$). 


We summarize our main contributions as follows:
\begin{itemize}[leftmargin=0.2in]

\item[(1)] 
We make a novel connection between performance generalization and local explainability, arriving at Theorem 1. 
Given the relationship between $\mirNF$ and $\rho_S$, this bound roughly captures that an easier-to-interpret $f$ enjoys better generalization guarantees, a potentially valuable result when reasoning about $\mathcal{F}$ is difficult (e.g. for neural networks).
Further, our proof technique may be of independent theoretical interest as it provides a new way to bound the Rademacher complexity of a \textit{randomized} function
(see Section \ref{sec:performance}).

\item[(2)] 
We motivate and explore an important generalization question about expected explanation quality. 
Specifically, we arrive at Theorem \ref{thm:explanation-generalization}, a bound for test $\mirNF$ in terms of training $\mirNF$.
This bound suggests that practitioners can better guarantee good local explanation quality (measured by $\mirNF$) using methods which encourage the neighborhood widths to be wider (see Section \ref{sec:explanations}).

\item[(3)]  
We verify empirically on UCI Regression datasets that our results non-trivially reflect the two types of generalization in practice. First, we demonstrate that $\rho$ can indeed exhibit slower than $\tilde{\mathcal{O}}(\sqrt{m})$ growth without significantly increasing the $\mirNF$ terms.
Also, for Theorem \ref{thm:explanation-generalization}, we show that the generalization gap indeed improves with larger neighborhoods (see Section \ref{sec:experiments}).

\item[(4)] 
To aid in our theoretical results, we propose $\mirNF$ as a novel measure of local explainability. 
This metric naturally complements $\NF$ and 
offers a unique advantage over $\NF$ when evaluating local explanations on ``realistic'' on-distribution data (see Section \ref{sec:MNF}).  

\end{itemize}

\section{Related Work} 
\textbf{Interpretability meets learning theory:}
\cite{Semenova2019Rashomon} study the performance generalization of models learned from complex classes when they can be globally well-approximated by simpler (e.g. interpretable) classes. In such cases, their theory argues that if the complex class has many models that perform about as optimally on training data, generalization from the complex class can be more closely bounded in terms of the simpler class's complexity. In our corresponding results, we similarly aim to avoid involving the larger class's complexity. However, we directly study generalization via a function's local explainability, rather than instantiate abstract "complex" and “simple” classes for global approximations. The two are fundamentally different technical problems; standard learning theory results cannot be directly applied as they are for single-function global approximations.


\textbf{Statistical localized regression:}
\citep{Fan1993, FanGijbels1996} are canonical results which bound the squared error of a nonparametric function defined using locally fit models. These local models are both simple (e.g. linear) and similarly trained by weighting real examples with a kernel (i.e. neighborhood). However, in these works, each local model is only used to make a prediction at its source point and the theory requires shrinking the kernel width towards 0 as the sample size grows.
We instead fit local models \textit{as explanations} for  a trained model (i.e. which is considered the ``true regression function'') and more importantly, care about the performance of each local model over whole (non-zero) neighborhoods. Unlike localized regression, this allows us to use uniform convergence to bound test error with empirical and generalization terms. While the previous results do not have empirical terms, the learning rates are exponential in the number of samples. 

\textbf{Learning Theory:} One line of related work also studies how to explain generalization of overparameterized classes. As standard uniform convergence on these classes often leads to vacuous bounds, a general approach that has followed from \citep{Nagarajan2019, Zhang2017, Neyshabur2014} has been to study implications of different biases placed on the learned models. We study what would happen if an overparameterized model had an unexplored type of bias, one that is inspired by local explainability. Additionally, our work's technical approach also parallels another line of existing results which likewise try to apply uniform convergence on a separate surrogate class. This includes PAC-Bayesian bounds, a large family of techniques that come from looking at a stochastic version of in parameter space \citep{McAllester1998, McAllester2003, Langford2002, Langford2003}. In a different vein, some results in deep learning look at compressed/sparsified/explicitly regularized surrogates of neural networks \citep{Arora2018, Dziugaite2017}. In our case, the surrogate class is a collection of local explanations. 

\section{Mirrored Neighborhood Fidelity}
\label{sec:MNF}
In order to connect local explanations to generalization, recall that we study a measure of local interpretability which we call ``mirrored neighborhood fidelity'' ($\mirNF$). As we explain below, this quantity comes from a slight modification to an existing measure of interpretability, namely, that of neighborhood fidelity ($\NF$).

To define our terms, we use the following notations. Let $\mathcal{X}$ be an input space and let $D$ be a distribution over $\mathcal{X} \times \mathcal{Y}$ where $\mathcal{Y} \subseteq \mathbb{R}$. Let $\mathcal{F}$ be a class of functions  $f: \mathcal{X} \to \mathcal{Y}$. For our theoretical results, we specifically assume that $\mathcal{Y}$ is bounded as $\mathcal{Y} = [-B,B]$ for some $B > 0$ (though this does not matter until following sections). 
 In order to provide local explanations, we need to fix a local neighborhood around each $x \in \mathcal{X}$. To this end, for any $x$, let $N_x$ correspond to some distribution denoting a local neighborhood at $x$ e.g., typically this is chosen to be a distribution centered at $x$. For any distribution $N$, we use $p_{N}(x)$ to denote its density at $x$. Now, let
 $\mathcal{G}$ be a class of explainers 
 $g: \mathcal{X} \times \mathcal{X} \to \mathcal{Y}$ such that for each $x \in \mathcal{X}$, the local explanation $g(x,\cdot): \mathcal{X} \to  \mathcal{Y}$ belongs to a class of (simple) functions (e.g. linear), $\Glocal$. In short, we denote $g(x, \cdot)$ as $g_x(\cdot)$ and we'll use 
  $g(x, \cdot)$ to locally approximate $f$ in the neighborhood defined by $N_x$. 
 
 The accuracy of this local approximation is usually quantified by a term called ``neighbhorhood fidelity'' which is defined as follows \citep{Ribeiro2016, Ribeiro2018, Plumb2018MAPLE, Plumb2020ExpO}
 \[
 \NF(f,g) := \mathbb{E}_{x \sim D} \left[ \mathbb{E}_{x' \sim N_x} \left[ (f(x') - g_x(x'))^2\right] \right].
 \]
To verbally interpet this, let us call $x$ as the ``source'' point which gives rise to a
 local explanation $g_x(\cdot)$ and $x'$ the ``target'' point that we try to fit using $g$. To compute $\NF(f,g)$, we need to do the following: for each source point $x$, we first compute the average error in the fit of $g_x(\cdot)$ over target points $x'$ in the local neighborhood of the source point $x$ (i.e., $N_x$); then, we globally average this error across draws of the source point $x \sim D$.
 
 Now, to define $\mirNF$, we take the same expression as $\NF$ but swap $x$ and $x'$ within the innermost expectation (without modifying the expectations). 
 In other words, 
 we now sample {\em a target point} $x$ from 
 $D$, and sample  {\em a source point} $x'$ from a distribution over points near $x$.
 Since this distribution is over source points rather than target points, just for the sake of distinguishing, we'll call this a {\em mirrored} neighborhood distribution 
 and denote it as $\mirN_x$.
 Formally we define this measure of local interpretability below, following which we explain how to understand it:

\begin{definition} \textbf{(Mirrored Neighborhood Fidelity)}
We define $\mirNF: \mathcal{F} \times \mathcal{G} \to \mathbb{R}$ as \[\mirNF(f,g) := \mathbb{E}_{x \sim D} \left[ \mathbb{E}_{x' \sim \mirN_x} \left[ (f(x) - g_{x'}(x))^2\right] \right].\] and with an abuse of notation, we let $\mirNF(f,g,x) := \mathbb{E}_{x' \sim \mirN_x} \left[ (f(x) - g_{x'}(x))^2\right]$.
\end{definition}

\textbf{Understanding $\mirNF$.} It is helpful to parse the expression for $\mirNF$ in two different ways. 
First, we can think of it as measuring the error in approximating every target point $x \in \mathcal{X}$ through a \textit{randomized} 
locally-approximating function $g_{x'}(\cdot)$ where $x'$ is randomly drawn from the local neighborhood $\mirN_{x}$.
A second way to parse this is in a manner similar to how we parsed $\NF$. To do this, first we note that the expectations in $\mirNF$ can be swapped around and rewritten equivalently as follows:
 \[
 \mirNF(f,g) = \mathbb{E}_{x' \sim D^\dagger} \left[ \mathbb{E}_{x \sim N^\dagger_{x'}} \left[ (f(x) - g_{x'}(x))^2\right] \right],
 \]
 where $D^{\dagger}$ and $N^\dagger_{x'}$ are suitably defined distributions (defined in Appendix~\ref{sec:mnf-appendix}) that can be thought of as modified counterparts of $D$ and $\mirN_{x'}$ respectively.  With this rewritten expression, one can read $\mirNF$ like $\NF$: for each source point (here that is $x'$), we compute the average error in the fit of the corresponding local function ($g_{x'}(\cdot)$) over target points ($x$) in the local neighborhood of the source point ($N^\dagger_{x'}$); this error is then globally averaged over different values of the source point ($x' \sim D^\dagger$).
 

While both $\NF$ and $\mirNF$ are closely related measures of local interpretability for $f$, studying $\mirNF$ allows us to make connections between local interpretability and different notions of generalization (Sections \ref{sec:performance} and \ref{sec:explanations}). Furthermore, $\mirNF$ may 
also be of interest to the interpretability community, as it offers a unique advantage over $\NF$ when the intended usage of local explanations is centered around understanding how the model works on the specific learning task it was trained on. 

Specifically, we argue that selecting the target point distribution to be $D$ rather than $D$ perturbed by $N_x$ (as for $\NF$) better emphasizes the ability for explanations to accurately convey how well $g$ will predict at \textit{realistic} points. This is relevant for ML (and deep learning particularly) because (a) high-dimensional datasets often exhibit significant feature dependencies and adherence to lower dimensional manifolds; (b) $f$ can often be highly unpredictable and unstable when extrapolating beyond the training data. As such, when one measures $\NF$ with standard neighborhood choices that ignore feature dependencies (i.e. most commonly $N_x = \mathcal{N}(x,\sigma I)$), the resulting target distribution may concentrate significantly on regions that are non-relevant to the actual task at hand. As can be shown, this can lead to overemphasis on fitting noisy off-manifold behavior, deteriorating the fit of explanations relative to task-relevant input regions (we defer a more detailed presentation of this point, as well as other trade-offs between $\NF$ and $\mirNF$ to Appendix \ref{sec:mnf-appendix}). 


\section{Generalization of Model Performance via $\mirNF$} 
\label{sec:performance}
 The generalization error of the function $f$ is typically bounded by some notion of the representational capacity/complexity of $f$. While standard results bound complexity in terms of parameter counts, there is theoretical value in deriving bounds involving other novel terms. By doing so, we can understand how regularizing for those terms can affect the representation capacity, and in turn, the generalization error of $f$. Especially when $f$'s complexity may be intractable to bound on its own, introducing these terms provides a potentially useful new way to understand $f$'s generalization. 
 
 Here specifically, we are interested in establishing a general connection between the representation complexity and the local intrepretability of \textit{any} $f$. 
 This naturally requires coming up with a notion that appropriately quantifies the complexity of $\mathcal{G}$, which we discuss in the first part of this section. In the second part, we then relate this quantity to the generalization of $f$ to derive our first main result.   
 

 \textbf{Key technical challenge: bounding the complexity of $\mathcal{G}$.}
 The overall idea behind how one could tie the notions of generalization and local interpretability is fairly intuitive.
 For example, consider a simplified setting where we approximate $f$ by dividing $\mathcal{X}$ into $K$ disjoint pieces/neighborhoods, and then
 approximating each neighborhood via a simple (say, linear) model.
 Then, one could bound the generalization error of $f$ as the sum of two quantities: first, the error in approximating $f$ via the piecewise linear model,
 and second, a term involving the complexity of the piecewise linear model. It is straightforward to show that this complexity grows polynomially with the piece-count, $K$, and also the complexity of the simple local approximator (see Appendix \ref{sec:piecewise}). Similarly, one could hope to bound  the generalization error of $f$ in terms of $\mirNF(f,g)$ and the complexity of $\mathcal{G}$. However, the key challenge here is that the class $\mathcal{G}$ is a much more complex class than the above class of piecewise linear models. For example, a straightforward piece-count-based complexity bound would be infinitely large since there are effectively infinitely many unique pieces in $g$.



Our core technical contribution here is to bound the Rademacher complexity of $\mathcal{G}$ in this more complex local-interpretability setting.
At a high level, the resulting bound (which will be stated shortly) grows with ``the level of overlap'' between the neighborhoods $\{\mirN_x | x \in \mathcal{X} \}$, quantified as: 
\begin{definition}
Given a dataset $S \in (\mathcal{X} \times \mathcal{Y})^m$, we define the \textbf{disjointedness factor} $\rho_S$ as 
\[\rho_S := \int_{x' \in \mathcal{X}} \sqrt{\frac{1}{m}\sum_{i=1}^{m}(p_{\mirN_{x_i}}(x'))^2} dx'\]
\end{definition}
\textbf{Understanding the disjointedness factor.}  
$\rho_S$ can be interpreted as bounding the ``effective number'' of pieces induced by the set of neighborhood distributions $\{ \mirN_x | x \in \mathcal{X}\}$. This turns out to be a quantity that lies in $[1,\sqrt{m}]$ (shown formally in Appendix Fact~\ref{fact:rho-bound}).
To intuit about this quantity, it is helpful to consider its behavior in
 extreme scenarios. First, consider the case where $\mirN_x$ is the same distribution (say $N$) regardless of $x$
 i.e., neighborhoods are completely overlapping. Then, $\rho_S = \int_{x' \in \mathcal{X}} (p_{N}(x')) dx' = 1$. In the other extreme, consider if neighborhoods centered on the training data are all disjoint with supports $\mathcal{X}_1, \hdots, \mathcal{X}_{|S|}$. Here the integral splits into $m$ summands as: $\rho_S =  {\sum_{i=1}^{m}\int_{x' \in \mathcal{X}_i}  \frac{1}{\sqrt{m}}{p_{\mirN_{x_i}}(x')dx'}}  = \sqrt{m}$. Thus, intuitively $\rho_S$ grows from $1$ to $\sqrt{m}$ as the level of overlap between the neighborhoods $\smash{\mirN_{x_1}, \hdots, \mirN_{x_{|S|}}}$ reduces. For intuition at non-extreme values, we show in Appendix~\ref{sec:in-between} that in a simple setting,  $\rho = \sqrt{m^{{1-k}}}$ (where $0 \leq k \leq 1$) if every neighborhood is just large enough to encompass a $\nicefrac{1}{m^{1-k}}$ fraction of mass of the distribution $D$. 

\textbf{Rademacher complexity of $\mathcal{G}$.}
We now use $\rho_S$ to bound the Rademacher complexity of $\mathcal{G}$. First, in order to define the complexity of $\mathcal{G}$, it is useful to think of  $g$ as {\em a randomized function}. Specifically, at any target point $x$, the output of $g$ is a random variable $g_{x'}(x)$ where the randomness comes from $x' \sim \mirN_x$. 
Then, in Lemma~\ref{lem:rademacher}, we take this randomization into account to define and bound the complexity of $\mathcal{G}$ (which we use prove our main results). To keep our statement general, we consider a generic loss function $L: \mathbb{R} \times  \mathbb{R} \to \mathbb{R}$ (e.g., the squared error loss is $\smash{L(y,y')=(y-y')^2}$). Whenever $L$ satisfies a standard Lipschitz assumption, we can bound the complexity of $\mathcal{G}$ composed with the loss function $L$, in terms of $\rho_S$, the complexity of $\Glocal$ and the Lipschitzness of $L$:

 \begin{lemma}
 \label{lem:rademacher} (see Appendix Lemma~\ref{lem:rademacher-full} for full, precise statement)
Let $L(\cdot,y')$ be a $c$-Lipschitz function w.r.t. $y'$ in that for all $y_1, y_2 \in [-B,B]$, $|L(y_1,y') - L(y_2, y')| \leq c |y_1-y_2|$. Then, the empirical Rademacher complexity of $\mathcal{G}$ under the loss function $L$ is defined and bounded as:
  \[
  \hat{\mathcal{R}}_S(L \circ \mathcal{G}) := \mathbb{E}_{\vec{\sigma}}\left[ \sup_{g \in {\mathcal{G}}} \frac{1}{m} \sum_{i}^{m} \sigma_i \mathbb{E}_{x' \sim \mirN_{x_i}}[L(g_{x'}(x_i),y_i)] \right] \leq  O\left(c \rho_S \hat{\mathcal{R}}_S(\Glocal) \cdot  \ln m \right).
  \]
 \end{lemma}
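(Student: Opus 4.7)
The plan is to bound $\hat{\mathcal{R}}_S(L \circ \mathcal{G})$ by reducing the randomized class $\mathcal{G}$ back to the deterministic local class $\Glocal$, while using the density-overlap structure of $\{\mirN_{x_i}\}$ (encoded by $\rho_S$) to limit the cost of this reduction.

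First I would write the inner expectation in integral form,
\[
\mathbb{E}_{x' \sim \mirN_{x_i}}\!\left[L(g_{x'}(x_i),y_i)\right] = \int p_{\mirN_{x_i}}(x')\, L(g_{x'}(x_i),y_i)\, dx',
\]
so that, after exchanging the finite sum with the integral,
\[
\hat{\mathcal{R}}_S(L \circ \mathcal{G}) = \mathbb{E}_{\vec{\sigma}} \sup_{g \in \mathcal{G}} \int \frac{1}{m}\sum_{i=1}^m \sigma_i\, w_i(x')\, L(g_{x'}(x_i),y_i)\, dx',
\]
where I abbreviate $w_i(x') := p_{\mirN_{x_i}}(x')$. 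The crucial structural step is then to pull the supremum inside the integral via $\sup \int \leq \int \sup$. This exchange is useful because, for any fixed $x'$, the integrand depends on $g$ only through $g_{x'} \in \Glocal$; hence the inner sup is over the much smaller class $\Glocal$, not $\mathcal{G}$. A Fubini step, followed by the Ledoux--Talagrand contraction principle to peel off the $c$-Lipschitz loss $L$, reduces the problem to bounding the weighted Rademacher process
\[
\int \mathbb{E}_{\vec{\sigma}} \sup_{h \in \Glocal} \frac{1}{m}\sum_{i=1}^m \sigma_i\, w_i(x')\, h(x_i)\, dx'.
\]

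The main obstacle is bounding this weighted process, for each fixed $x'$, by $\hat{\mathcal{R}}_S(\Glocal)$ times the $\ell_2$-norm $\|\vec{w}(x')\|_2$ (as opposed to the $\ell_\infty$-norm). This is essential because only the $\ell_2$-norm integrates to $\rho_S$, via the identity $\rho_S = \tfrac{1}{\sqrt{m}}\int \|\vec{w}(x')\|_2\, dx'$; a naive contraction that only captures $\|\vec{w}(x')\|_\infty$ would be loose by a factor of $\sqrt{m}$. To recover the $\ell_2$-norm, I would use a dyadic peeling argument: partition the indices $\{1,\dots,m\}$ into $O(\ln m)$ buckets $A_k$ on which $w_i(x')$ lies in a dyadic interval of width $2$, apply contraction within each bucket to replace the weighted sum by an unweighted one, and use the sub-sample inequality $\mathbb{E}_{\vec{\sigma}}\sup_h \sum_{i \in A_k}\sigma_i h(x_i) \leq m\,\hat{\mathcal{R}}_S(\Glocal)$ (itself a consequence of contraction applied to the $1$-Lipschitz indicator map $z \mapsto z\cdot \mathbb{1}[i \in A_k]$). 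Combining this with the cardinality bound $|A_k| \leq O(4^k)\,\|\vec{w}(x')\|_2^2/\|\vec{w}(x')\|_\infty^2$, and using the standard $1/\sqrt{n}$-type scaling of the Rademacher complexity when restricted to subsamples, the contribution of each bucket becomes of the same order; summing across the $O(\ln m)$ buckets yields a bound of $O(\ln m)\cdot \|\vec{w}(x')\|_2\sqrt{m}\,\hat{\mathcal{R}}_S(\Glocal)$ on the unnormalized sum.

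The final step is routine: after the $1/m$ normalization, integrating over $x'$ and invoking $\int \|\vec{w}(x')\|_2\, dx' = \sqrt{m}\,\rho_S$ collapses the $\sqrt{m}$ factors and produces the stated bound $O(c\,\rho_S\,\hat{\mathcal{R}}_S(\Glocal)\,\ln m)$. I expect the $\ell_\infty$-vs-$\ell_2$ gap in the weighted-Rademacher step to be the principal technical difficulty, since it is what forces the use of dyadic peeling and is ultimately responsible for the $\ln m$ factor in the final bound.
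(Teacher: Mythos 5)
Your proposal is correct and reaches the stated bound, but the core technical step is genuinely different from the paper's. You share the same opening move: exchange the integral over $x'$ with the supremum so that, pointwise in $x'$, the supremum collapses from $\mathcal{G}$ to $\Glocal$ and the problem reduces to a weighted Rademacher process with weights $w_i(x') = p_{\mirN_{x_i}}(x')$ (the paper routes this through an importance-sampling distribution $\tilde{D}$ whose density is chosen at the very end to be proportional to $\sqrt{\tfrac{1}{m}\sum_i w_i(x')^2}$, with $\rho_S$ as the normalizer -- this is exactly equivalent to your direct use of $\int \|\vec{w}(x')\|_2\,dx' = \sqrt{m}\,\rho_S$). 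Where you diverge is in extracting the $\ell_2$-norm of the weights: the paper sorts $w_1 \le \cdots \le w_m$, telescopes $w_i = \sum_{j\le i}(w_j - w_{j-1})$ to decompose the weighted process into nested suffix datasets each carrying a uniform weight, and then an Abel-summation rearrangement plus Cauchy--Schwarz against the harmonic series produces $\|\vec{w}\|_2(\ln m + 1)$; you instead partition the indices into $O(\ln m)$ dyadic level sets of the weights and use the cardinality bound $|A_k| \lesssim 4^k \|\vec{w}\|_2^2/\|\vec{w}\|_\infty^2$ per level. Both are valid ways to beat the naive $\ell_\infty$ contraction, both pay the same $\ln m$, and neither is clearly simpler. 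One point you should make explicit: your step ``the standard $1/\sqrt{n}$-type scaling of the Rademacher complexity when restricted to subsamples'' is an assumption, not a theorem, for a general $\Glocal$ -- the paper has the identical dependence (its suffix complexities $\hat{\mathcal{R}}_{S_{j:m}}(\Glocal)$ play the role of your $\hat{\mathcal{R}}_{A_k}(\Glocal)$) and formalizes it by stating the full lemma in terms of $\hat{\mathcal{R}}^*_S(\Glocal) := \max_{T \subseteq S} \hat{\mathcal{R}}_T(\Glocal)\sqrt{|T|/m}$ rather than $\hat{\mathcal{R}}_S(\Glocal)$ itself. Your bound is really in terms of that quantity too, which matches the informal $O(\cdot)$ statement but should be flagged.
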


We note that the proof technique employed here may be of independent theoretical interest as it provides a novel way to bound the complexity of a randomized function. Although techniques like PAC-Bayes provide ways to do this, 
they do not apply here since the stochasticity in the function is of a different form.

 
 \textbf{Main result.} With the above key lemma in hand, we are now ready to prove our main result, which bounds the generalization error of $f$ in terms of the complexity of $\mathcal{G}$, thereby establishing a connection between model generalization and local interpretability. 
 
\begin{theorem}
\label{thm:performance-generalization} (see Appendix Theorem~\ref{thm:performance-generalization-full} for full, precise statement)
With probability over $1-\delta$ over the draws of $S = \{(x_1,y_1), \hdots, (x_m,y_m) \} \sim D^m$, for all $f \in \mathcal{F}$ and for all $g \in \mathcal{G}$, we have (ignoring $\ln 1/\delta$ factors):
\begin{align*}
     \mathbb{E}_{(x,y) \sim D}[(f(x)-y)^2] & \leq \frac{4}{m}\sum_{i=1}^{m} (f(x_i)-y_i)^2 + 2 \underbrace{\mathbb{E}_{x \sim D}[\mathbb{E}_{x' \sim \mirN_x} \left[ (f(x) - g_{x'}(x))^2\right]]}_{\mirNF(f,g)} \\
     & + \frac{4}{m}\sum_{i=1}^{m} \underbrace{\mathbb{E}_{x' \sim \mirN_x} \left[ (f(x_i) - g_{x'}(x_i))^2\right]}_{\mirNF(f,g,x_i)} +
     \mathcal{O}(B\rho_S \hat{\mathcal{R}}_S(\Glocal) \ln m), 
\end{align*}
and $\hat{\mathcal{R}}_S(\Glocal)$ is the empirical Rademacher complexity of $\Glocal$ defined as $\hat{\mathcal{R}}_S(\Glocal) := \mathbb{E}_{\vec{\sigma}}\left[\sup_{h \in \Glocal }\frac{1}{m}\sum_{i=1}^{m} \sigma_i h(x_i)\right]$ where $\vec{\sigma}$ is uniformly distributed over $\{-1,1 \}^m$.
\end{theorem}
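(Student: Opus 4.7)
The plan is to prove the theorem through two applications of the inequality $(a+b)^2 \leq 2a^2 + 2b^2$, sandwiching a single application of uniform convergence on the class $\mathcal{G}$ (using Lemma~\ref{lem:rademacher}). Crucially, uniform convergence will only be needed over $g \in \mathcal{G}$; the quantifier over $f \in \mathcal{F}$ is obtained for free, since $f$ only ever appears in purely pointwise/expectation manipulations on both sides of the chain.

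First, I introduce $g_{x'}(x)$ as an intermediary between $f(x)$ and $y$. For each fixed $(x,y)$ and $x' \sim \mirN_x$,
\[
(f(x) - y)^2 \;\leq\; 2(f(x) - g_{x'}(x))^2 + 2(g_{x'}(x) - y)^2.
\]
Taking the expectation over $x' \sim \mirN_x$ and then over $(x,y) \sim D$ produces
\[
\mathbb{E}_{(x,y)\sim D}[(f(x)-y)^2] \;\leq\; 2\,\mirNF(f,g) \;+\; 2\,\mathbb{E}_{(x,y)\sim D}\bigl[\mathbb{E}_{x' \sim \mirN_x}[(g_{x'}(x) - y)^2]\bigr].
\]
Note that the first term on the right is already the $\mirNF(f,g)$ term promised in the theorem; only the second term now requires empirical replacement.

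Second, I apply uniform convergence to the randomized class $\mathcal{G}$ with squared loss $L(y_1,y_2)=(y_1-y_2)^2$. Since the model outputs and labels lie in $[-B,B]$, the loss is bounded and $O(B)$-Lipschitz in its first argument. Combining Lemma~\ref{lem:rademacher} with a standard symmetrization/McDiarmid argument gives, with probability $1-\delta$ over $S$, simultaneously for all $g \in \mathcal{G}$,
\[
\mathbb{E}_{(x,y)\sim D}\bigl[\mathbb{E}_{x' \sim \mirN_x}[(g_{x'}(x) - y)^2]\bigr] \;\leq\; \frac{1}{m}\sum_{i=1}^m \mathbb{E}_{x' \sim \mirN_{x_i}}[(g_{x'}(x_i) - y_i)^2] + \mathcal{O}\bigl(B\,\rho_S\,\hat{\mathcal{R}}_S(\Glocal)\,\ln m\bigr).
\]
This is the one and only probabilistic step; after it, all further manipulations are deterministic.

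Third, I peel $f$ back out of the empirical loss on the right, again by the same quadratic inequality: for each training point $x_i$ and each $x' \sim \mirN_{x_i}$,
\[
(g_{x'}(x_i) - y_i)^2 \;\leq\; 2(g_{x'}(x_i) - f(x_i))^2 + 2(f(x_i) - y_i)^2.
\]
Averaging over $x' \sim \mirN_{x_i}$ turns the first term into $2\,\mirNF(f,g,x_i)$; averaging over $i$ and chaining into the previous bound gives exactly the stated $\tfrac{4}{m}\sum_i(f(x_i)-y_i)^2 + \tfrac{4}{m}\sum_i \mirNF(f,g,x_i)$ terms, along with the $2\,\mirNF(f,g)$ and complexity term already accumulated. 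Since the probabilistic step held uniformly over $g$, and since no step required any quantifier machinery on $f$, the resulting bound automatically holds for all $f \in \mathcal{F}$ and $g \in \mathcal{G}$.

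The main obstacle is the uniform convergence step, because $\mathcal{G}$ is a class of \emph{randomized} predictors: the prediction at a target $x$ is $g_{x'}(x)$ with $x' \sim \mirN_x$, and standard Rademacher-based uniform convergence is stated for deterministic classes. This is precisely what Lemma~\ref{lem:rademacher} is designed to handle, by defining the empirical Rademacher complexity with the inner expectation $\mathbb{E}_{x' \sim \mirN_{x_i}}[\cdot]$ baked into each summand and bounding it by $O(c\,\rho_S\,\hat{\mathcal{R}}_S(\Glocal)\ln m)$. The remaining work is routine: verify $O(B)$-Lipschitzness and $O(B^2)$-boundedness of the squared loss on $[-B,B]$ to invoke the contraction lemma and McDiarmid, absorbing the resulting $B\sqrt{\ln(1/\delta)/m}$ terms into the $\tilde{\mathcal{O}}$.
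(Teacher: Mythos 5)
Your proposal is correct and follows essentially the same route as the paper's proof of Theorem~\ref{thm:performance-generalization-full}: the same two applications of $(a-b)^2 \leq 2(a-c)^2 + 2(c-b)^2$ to insert and then remove $g_{x'}(x)$, with a single uniform-convergence step over $\mathcal{G}$ via Lemma~\ref{lem:rademacher} and the $4B$-Lipschitzness of the squared loss in between. Your explicit remark that the quantifier over $f \in \mathcal{F}$ comes for free because $f$ enters only through pointwise manipulations is a correct reading of why the paper's argument needs no complexity term for $\mathcal{F}$.
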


This result decomposes the test error of $f$ into four quantities. The first quantity corresponds to the training error of $f$ on the training set $S$. The second and the third correspond to the mirrored neighborhood fidelity of $f$ with respect to $g$ (computed on test and training data respectively). The fourth and final quantity corresponds to a term that bounds the complexity of $\mathcal{G}$ in terms of the ``disjointedness factor'' and the complexity of the simpler function class $\Glocal$.

\textbf{Takeaway.} A key aspect of this bound is the trade-off that it captures with varying neighborhood widths. Consider shrinking the neighborhood widths to smaller and smaller values, in turn creating less and less overlap between the neighborhoods of the training data. Then, on the one hand, we'd observe that the complexity term (the fourth term on the R.H.S) increases. Specifically, since $\hat{\mathcal{R}}_S(\Glocal)$ typically scales as $O(1/\sqrt{m})$, as we go from the one extreme of full overlap to the other extreme of complete disjointedness, the complexity term would increase from $O(1/\sqrt{m})$ to $O(1)$ (eventually rendering the bound trivial). On the other hand,
as the widths decrease, the fidelity terms (the second and the third term) would likely \textit{decrease} -- this is because the simple functions in $\Glocal$ would find it easier and easier to approximate the shrinking neighborhoods.
 
This tradeoff is intuitive. A function $f$ that is hardly amenable to being fit by local explanations would require 
 extremely tiny neighborhoods for $\Glocal$ to locally approximate it (i.e. make the $\mirNF$ terms small). For example, in an extreme case, when the neighborhoods $\mirN_x$ are set be point masses at $x$, it is trivially easy to find $g_x(\cdot) \in \Glocal$ with no approximation error. Thus, the complexity term would be too large in this case, implying that a hard-to-interpret $f$ results in bad generalization. On the other hand, when $f$ is easy to interpret, then we'd expect it to be well-approximated by $\Glocal$ even with wider neighborhoods. This allows one to afford smaller values for \textit{both} the complexity and $\mirNF$ terms. In other words, an easy-to-interpret $f$ enjoys better generalization guarantees.

\textbf{Caveats.} Our bound has two limitations worth noting. First, for high-dimensional datasets (like image datasets), practical choices of $N_x$ can lead to almost no overlap between neighborhoods, thus rendering the bound trivial in practice. This potentially poor dimension-dependence is a caveat similarly shared by bounds for non-parametric local regression, whereby increasing $d$ results in an exponential increase in the required sample size \citep{Fan1993, FanGijbels1996}. 
Nevertheless, for low-dimensional datasets, we show in the experiments that for practical choices of the neighborhood distributions, there \textit{is} sufficient neighborhood overlap to achieve values of $\rho_S$ that are $o(\sqrt{m})$. 

A second caveat is that the second quantity,  $\mirNF(f,g)$, requires \textit{unlabeled} test data to be computed, which may be limiting if one is interested in numerically computing this bound in practice. It is however possible to get a bound without this dependence, although only on the test error of $g$ rather than $f$ (see Appendix Theorem~\ref{thm:performance-generalization-full-2}). Nevertheless, we believe that the above bound has theoretical value in how it establishes a connection between the interpretability of $f$ and its generalization. 


\section{Generalization of Local Explainability} 
\label{sec:explanations}
We now turn our attention to a more subtle kind of generalization that is both unstudied yet important. Typically, the way $g_{x'}$ is learned at any source point $x'$ is by fitting a finite set of points sampled near $x'$, with the hope that this fit generalizes to unseen, neighboring target points. Naturally, we would want to ask: how well do the explanations $g_{x'}$ themselves generalize in this sense?

The subtlety in this question is that it is not always a worthwhile question to ask. In particular, assume that we learn $g_{x'}$ by sampling a set $S_{x'}$ of nearby points from a Gaussian centered at $x'$, and that we care about the fit of $g_{x'}$ generalizing to the same Gaussian. Here, we have access to unlimited amounts of data from the known Gaussian distribution (and free labels using $f(\cdot)$), so we can be sure that with sufficiently large $S_{x'}$, $g_{x'}$ will fit to arbitrarily small error on local neighborhoods. Hence, the above generalization question is neither conceptually nor practically interesting here.

However, consider \textit{finite sample-based} local explanation settings like MAPLE \citep{Plumb2018MAPLE} and RL-LIM \citep{Yoon2019} where the training procedure is vastly different from this:  in these procedures, the goal is to learn local explanations $g_{x'}$ in a way that is sensitive to the local structure of the (unknown) underlying data distribution $D$. So, instead of fitting the $g_{x'}$ to samples drawn from an arbitrarily defined Gaussian distribution, here one first draws a finite sample $S$ from the underlying distribution $D$ (and then labels it using $f$).
Then, across all $x' \in \mathcal{X}$, one reuses a reweighted version of the \textit{same} dataset $S$ (typically, points $x$ in $S$ that are near $x'$ are weighted more) and then learns a  $g_{x'}$ that fits this reweighted dataset. Contrast this with the former setting, where for each $x'$, one has access to a \textit{fresh} dataset (namely, $S_{x'}$) to learn $g_{x'}$. This distinction then makes it interesting to wonder when the reuse of a common dataset $S$ 
could cause the explanations to generalize poorly.



Motivated by this question, we present Theorem~\ref{thm:explanation-generalization}. By using Lemma~\ref{lem:rademacher}, we provide a bound on the ``test MNF'' (which corresponds to the fit of $g_{x'}$ on the unseen data averaged across $D$) in terms of the ``train MNF'' (which corresponds to the fit of $g_{x'}$ on $S$, averaged across $x'$) and the complexity term from Lemma~\ref{lem:rademacher}.  We must however caution the reader that this theorem does \textit{not} answer the exact question posed in the above paragraph; it only addresses it indirectly as we discuss shortly.

\begin{theorem}
\label{thm:explanation-generalization} (see Appendix Theorem~\ref{thm:explanation-generalization}-full for full, precise statement)
For a fixed function $f$, with high probability $1-\delta$ over the draws of $S \sim D^m$, for all $g \in \mathcal{G}$, we have (ignoring $\ln 1/\delta$ factors):

\[
\underbrace{\mathbb{E}_{\stackunder[0pt]{$\scriptstyle{x\sim D},$}{$\scriptstyle{x' \sim N_{x}}$}} \left[ (f(x)-g_{x'}(x))^2\right]}_{\text{test } \mirNF \text{ i.e., } \mirNF(f,g)} \leq \underbrace{\frac{1}{m}\sum_{i=1}^{m} \mathbb{E}_{x' \sim \mirN_{x}} \left[ (f(x_i)-g_{x'}(x_i))^2\right]}_{ \text{train } \mirNF} + O(\rho_S \mathcal{R}_S(\Glocal) \ln m).
\]
\end{theorem}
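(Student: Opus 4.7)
The plan is to recognize this as a direct uniform convergence statement in the spirit of standard Rademacher bounds, with the heavy lifting already done by Lemma~\ref{lem:rademacher}. Since $f$ is fixed throughout Theorem~\ref{thm:explanation-generalization}, I would define, for each $g \in \mathcal{G}$, the real-valued function
\[
h_g(x) \;:=\; \mathbb{E}_{x' \sim \mirN_x}\bigl[\,(f(x) - g_{x'}(x))^2\,\bigr],
\]
and consider the induced class $\mathcal{H} := \{ h_g : g \in \mathcal{G} \}$. Because $\mathcal{Y} = [-B,B]$, each $h_g$ takes values in $[0, 4B^2]$. Observe that the test $\mirNF(f,g)$ is exactly $\mathbb{E}_{x \sim D}[h_g(x)]$ and the train $\mirNF$ is exactly $\tfrac{1}{m} \sum_{i=1}^m h_g(x_i)$, so the theorem reduces to bounding the generalization gap of $\mathcal{H}$.

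Next I would invoke the standard one-sided Rademacher uniform convergence inequality for bounded function classes: with probability at least $1-\delta$ over $S \sim D^m$, simultaneously for all $g \in \mathcal{G}$,
\[
\mathbb{E}_{x \sim D}[h_g(x)] \;\leq\; \frac{1}{m}\sum_{i=1}^m h_g(x_i) \;+\; 2\,\hat{\mathcal{R}}_S(\mathcal{H}) \;+\; \mathcal{O}\!\left(B^2 \sqrt{\tfrac{\ln(1/\delta)}{m}}\right).
\]
The $\sqrt{\ln(1/\delta)/m}$ term is absorbed into the hidden $\ln(1/\delta)$ dependence mentioned in the theorem statement.

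The remaining step is to identify $\hat{\mathcal{R}}_S(\mathcal{H})$ with precisely the quantity bounded by Lemma~\ref{lem:rademacher}. Taking the loss $L(y_1, y_2) = (y_1 - y_2)^2$ and the ``labels'' to be $y_i := f(x_i)$ (which are fixed once $f$ is fixed), we have $h_g(x_i) = \mathbb{E}_{x' \sim \mirN_{x_i}}[L(g_{x'}(x_i), f(x_i))]$, so $\hat{\mathcal{R}}_S(\mathcal{H})$ matches $\hat{\mathcal{R}}_S(L \circ \mathcal{G})$ from Lemma~\ref{lem:rademacher}. Since $L(\cdot, y')$ is $4B$-Lipschitz on $[-B,B]$ (its derivative is bounded by $2|y_1 - y_2| \leq 4B$), Lemma~\ref{lem:rademacher} gives $\hat{\mathcal{R}}_S(\mathcal{H}) \leq \mathcal{O}(B\, \rho_S\, \hat{\mathcal{R}}_S(\Glocal)\, \ln m)$. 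Combining with the display above, and absorbing constants and the $B$-factor into the $\mathcal{O}(\cdot)$, yields the claimed inequality.

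\textbf{Main obstacle.} Conceptually, the proof is a clean specialization of Lemma~\ref{lem:rademacher}; all the technical weight lives there, in bounding the Rademacher complexity of the randomized class $\mathcal{G}$ via the disjointedness factor $\rho_S$. The only subtlety at this stage is being careful that the labels $y_i$ in Lemma~\ref{lem:rademacher} can be set deterministically to $f(x_i)$ without breaking the symmetrization argument --- this is fine because $f$ is fixed before $S$ is drawn, so $(x_i, f(x_i))$ are i.i.d.\ from an appropriate joint distribution, and standard symmetrization goes through. No data-dependent labels are required, which is why this bound, unlike Theorem~\ref{thm:performance-generalization}, does not carry a training-error term on the right-hand side.
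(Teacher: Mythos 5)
Your proposal is correct and follows essentially the same route as the paper: the paper's proof likewise treats $f$ as a fixed labeling function (so $y_i = f(x_i)$), applies a standard one-sided Rademacher uniform convergence bound to the class of functions $x \mapsto \mathbb{E}_{x' \sim \mirN_x}[(f(x)-g_{x'}(x))^2]$, and invokes Lemma~\ref{lem:rademacher} with the $4B$-Lipschitzness of the squared loss exactly as in Equation~\ref{eq:3}. Your explicit definition of the class $\mathcal{H}$ and the note about symmetrization with fixed $f$ merely spell out details the paper leaves implicit.
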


\textbf{Understanding the overall bound.} We first elaborate on how this bound provides an (indirect) answer to our question about how well explanations generalize. 
Consider a procedure like MAPLE that learns $g$ using the finite dataset $S$. For each $x' \in \mathcal{X}$, we would expect this procedure to have learned a $g_{x'}$ that fits well on at least those target points $x$ in $S$ that are near $x'$. 
In doing so, it's reasonable to expect the training procedure to have implicitly controlled the ``train $\mirNF$'' term 
The reasoning for this is that the train $\mirNF$ computes the error in the fit of $g_{x'}$ on $S$ for different values of $x'$, and sums these up in a way that errors corresponding to nearby values of $(x,x')$ are weighted more (where the weight is given by $p_{\mirN_x}(x')$). 
 Now, our bound suggests that when this train $\mirNF$ is minimized, this carries over to test MNF too (provided the complexity term is not large). That is, we can say that the fit of $g_{x'}$ generalizes well to unseen, nearby target points $x$ that lie outside of $S$. 
 

\textbf{The indirectness of our result.} Existing finite sample-based explainers do not explicitly minimize the train $\mirNF$ term (e.g., MAPLE minimizes an error based upon $\NF$). However, as argued above, they have implicit control over train $\mirNF$. Hence, our bound essentially treats $\mirNF$ as a surrogate for reasoning about the generalization of the explanations learned by an arbitrary procedure. As such, our bound does \textit{not} comment on how well the exact kind of fidelity metric used during training generalizes to test data. Nevertheless, we hope that this result offers a concrete first step towards quantifying the generalization of explanations. Furthermore, we also note that one could also imagine a novel explanation-learning procedure that does explicitly minimize the train $\mirNF$ term to learn $g$; in such a case our bound would provide a direct answer to how well its explanations generalize. Indeed, we derive such a theoretically-principled algorithm in Appendix \ref{sec:mnf-appendix}.
 

\textbf{Takeaway.} 
While the above bound captures a similar trade-off with neighborhood width as the Theorem \ref{thm:performance-generalization}, it is worth pausing to appreciate the distinct manner in which this tradeoff arises here.  In particular, when the width is too small, we know that the complexity term approaches $O(\sqrt{m})$ and generalization is poor. Intuitively, this is because in this case, the procedure for learning $g_{x'}$ would have been trained to fit very few datapoints from $S$ that would have fallen in the small neigbhorhood of $x'$.
On the other hand, when the neighborhoods are large, this issue would not persist which is captured by the fact that $\rho_S$ approaches $O(1)$. 
However, with large neighborhoods, it may also be hard to find functions in $\Glocal$ that fit so many points in $S$.
Overall, one practical takeaway from this bound is that it is important to not excessively shrink the neighborhood widths if one wants explanations that generalize well for predicting how $f$ behaves at unseen points (see Section \ref{sec:experiments}). 

\textbf{Caveats.} We remark that this particular bound applies only when the dataset $S$ is used to learn only $g$ i.e., $f$ and the neighborhoods must be learned beforehand with separate data. This sort of a framework is typical when deriving theoretical results for models like random forests, where it greatly aids analysis to assume that the trees' splits and their decisions are learned from independent datasets (i.e. two halves of an original dataset) \citep{ArlotGenuer2014}. Now, if one is interested in a bound where $S$ is also used to simultaneously learn $f$, the only change to the bound is an added factor corresponding to the complexity of $\mathcal{F}$. 
Another caveat is that our bound only tells us how well the explanations $g_{x'}$ generalize \textit{on average} over different values of $x'$. This does not tell us anything about the quality of the generalization of $g_{x'}$ for an arbitrary value of $x'$. That being said, just as average accuracy remains a central metric for performance (despite ignoring discrepancies across inputs), average $\mirNF$ can still be a useful quantity for evaluating an explainer's overall performance.  



\section{Empirical Results}
\label{sec:experiments}
We present two sets of empirical results to illustrate the the usefulness of our bounds. 
First, we demonstrate that $\rho_S$ grows much smaller than $\mathcal{O}(\sqrt{m})$ which, as stated before,  establishes that our bounds yield meaningful convergence rates.
Second, we show that Theorem \ref{thm:explanation-generalization} accurately reflects the relationship between explanation generalization (Theorem \ref{thm:explanation-generalization}) and the width of $\mirN_x$ used to both generate and evaluate explanations. 

\textbf{Setup.} 
For both experiments, we use several regression datasets from the UCI collection \citep{Dua:2019} and standardize each feature to have mean 0 and variance 1. 
We train neural networks as our ``black-box'' models with the same setup as in \citep{Plumb2020ExpO}, using both their non-regularized and ExpO training procedures. 
The latter explicitly regularizes for $\NF$ during training, which we find also decreases $\mirNF$ on all datasets. 
For generating explanations, we define $\Glocal$ to be linear models and optimize each $g_x$ using the empirical $\mirNF$ minimizer (see Appendix \ref{sec:mnf-appendix}). 
Finally, we approximate $\rho_S$ using a provably accurate sampling-based approach (see Appendix \ref{experiments-appendix}).

\begin{figure}[h]
    \label{fig:experiments}
    \centering \includegraphics[width=0.3\textwidth]{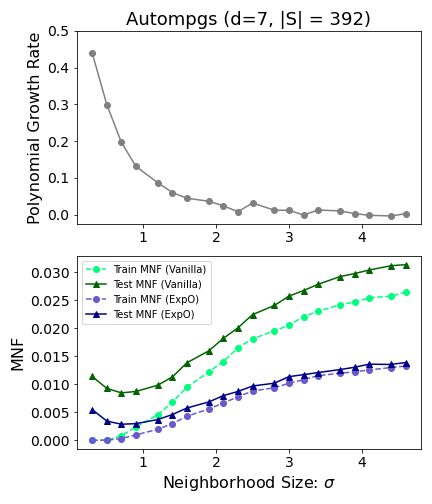} \includegraphics[width=0.3\textwidth]{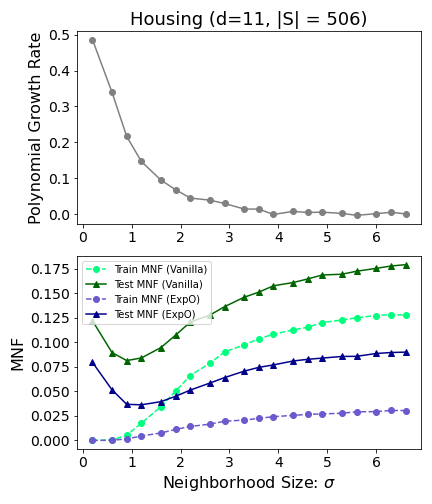} \includegraphics[width=0.3\textwidth]{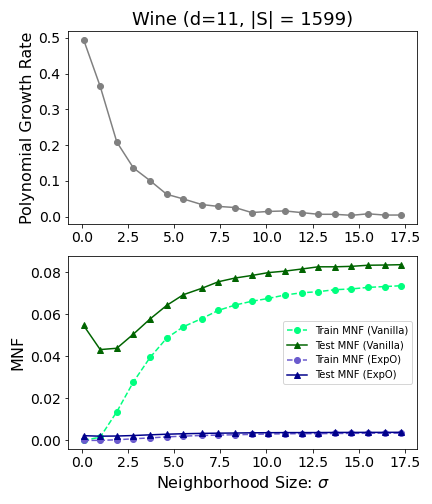} 
     \caption{ Approximate exponent of $\rho_S$'s polynomial growth rate (top) and train/test $\mirNF$ (below) plotted for various neighborhood widths across several UCI datasets (see Appendix \ref{experiments-appendix} for more). 
     }
    \label{fig:experiments}
\end{figure} 

\textbf{Growth-rate of $\rho_S$.} 
In Figure \ref{fig:experiments} (top), we track the sample  dependence of $\rho_S$ for various neighborhoods of width $\sigma$ (setting $\mirN_x = \mathcal{N}(x,\sigma I)$). We specifically approximate the growth rate as polynomial, estimating the exponent by taking the overall slope of a log-log plot of $\rho_S$ over $m$. To cover a natural range for each dataset, $\sigma$ is varied to be between the smallest and half the largest inter-example $l_2$ distances. In these plots, while small $\sigma$ result in a large exponent for $\rho_S$ and large $\sigma$ cause $g$ to intuitively saturate towards a global linear model, we observe that there do exist values of $\sigma$, where both these terms are in control i.e., we observe that we can achieve a growth rate of approximately $\mathcal{O}(m^{0.2})$ without causing $g$ to saturate and $\mirNF$ metrics to rise sharply. 

\textbf{Generalization and neighborhood size.} 
As per the setting of Theorem \ref{thm:explanation-generalization}, we generate all explanations using data not used to learn the black-box model. 
Specifically, we split the original test data into two halves, using only the first half for \textit{explanation training} and the second for \textit{explanation testing}. 
We plot $\mirNF$ as measured over these two subsets of examples in Figure \ref{fig:experiments} (bottom). 
From the results, it is evident that a generalization gap between train and test $\mirNF$ exists. 
Further, recall that Theorem \ref{thm:explanation-generalization} predicts that this gap decreases as wider neighborhoods are used, a phenomena reflected in most of these plots. 
As a result, while training $\mirNF$ monotonically increases with larger neighborhoods, test $\mirNF$ always decreases at certain ranges of $\sigma$.

\section{Conclusion and Future Work} 
In this work, we have studied two novel connections between learning theory and local explanations. We believe these results may be of use in guiding the following directions of future work: (1) developing new local explanation algorithms inspired by our theory and the metric of $\mirNF$; (2)  resolving caveats or otherwise strengthening the theory presented in this paper; and (3) exploring applications of our techniques beyond interpretability, such as the general problem of deep learning generalization or others that require reasoning about the complexity of randomized functions.

\section*{Acknowledgments}
This work was supported in part by DARPA FA875017C0141, the National Science Foundation grants IIS1705121 and IIS1838017, an Okawa Grant, a Google Faculty Award, an Amazon Web Services Award, a JP Morgan A.I. Research Faculty Award, and a Carnegie Bosch Institute Research Award. Vaishnavh Nagarajan was supported by a grant from the Bosch Center for AI. Any opinions, findings and conclusions or recommendations expressed in this material are those of the author(s) and do not necessarily reflect the views of DARPA, the National Science Foundation, or any other funding agency.

\bibliography{iclr2021_conference}
\bibliographystyle{iclr2021_conference}

\newpage
\appendix
\section{More on Mirrored neighborhood fidelity}
\label{sec:mnf-appendix}
Here we elaborate on how the expression for $\mirNF$ can be parsed in the same way as $\NF$ after  juggling some terms around. Recall that $\mirNF$ is defined as:
\[\mirNF(f,g) := \mathbb{E}_{x \sim D} \left[ \mathbb{E}_{x' \sim \mirN_x} \left[ (f(x) - g_{x'}(x))^2\right] \right].\] and with an abuse of notation, we let $\mirNF(f,g,x) = \mathbb{E}_{x' \sim \mirN_x} \left[ (f(x) - g_{x'}(x))^2\right]$.

Here the outer expectation is over the \textit{target} points $x$ that the explanations try to fit, and the inner expectation is over the \textit{source} points $x'$ which give rise to the explanations $g_{x'}$. 

If we can swap these expectations around, we can afford a similar parsing as $\NF$. To get there, first consider the joint distribution over $x$ and $x'$ that is induced by generating $x \sim D$ and then picking $x' \sim \mirN_x$. Under this joint distribution, 
 we need an expression for the marginal distribution of $x'$. This distribution, which we denote by $D^{\dagger}$, is given by:

\[
p_{D^\dagger}(x') = \int_{\mathcal{X}} p_{D}(x) p_{\mirN_{x}}(x') dx.
\]
To get a sense of what $D^{\dagger}$ looks like, imagine that $\mirN_x$ is a Gaussian centered at $x$. Then $D^{\dagger}$ corresponds to convolving $D$ with a Gaussian i.e., a smoother version of $D$.

Next, under the same joint distribution, we need an expression for the distribution of $x$ conditioned on $x'$. This distribution, denoted as $N^\dagger_{x'}$, is given by:

\[
p_{N^\dagger_{x'}}(x) = \frac{p_{D}(x)p_{\mirN_x}(x')}{\int_{\mathcal{X}} p_{D}(x) p_{\mirN_{x}}(x') dx}.
\]

Intuitively, ${N}_{x'}^{\dagger}$ is distribution that is centered around $x'$ and is also weighted by the distribution $D$ i.e., points that are both close to $x'$ and realistic under $D$ have greater weight under  ${N}_{x'}^{\dagger}$. This is because the term $p_{\mirN_{x}}(x')$ in the numerator prioritizes points that are near $x'$ (imagine $\mirN_x$ being a Gaussian centered at $x$), and the term $p_{D}(x)$ prioritizes realistic points. 

With these definitions in hand, we can swap the expectations around and get:
 \[
 \mirNF(f,g) = \mathbb{E}_{x' \sim D^\dagger} \left[ \mathbb{E}_{x \sim N^\dagger_{x'}} \left[ (f(x) - g_{x'}(x))^2\right] \right],
 \]
 This then has the same structure as $\NF$ in that the outer expectation is over the source points and the inner distribution over target points, and hence can be interpreted similarly. 

\subsection{Algorithm for minimizing empirical mirrored neighborhood fidelity}

We now consider how one might actually fit explanations to minimize $\mirNF$. Recall from the above discussion that from the point of view of each source point $x'$, $\mirNF$ measures how well $g_{x'}$ fits $f$ on the distribution with density $p_{N^\dagger_{x'}}(x) = \frac{p_{D}(x)p_{\mirN_x}(x')}{\int_{\mathcal{X}} p_{D}(x) p_{\mirN_{x}}(x') dx}$. Note that one does not have access to samples from this distribution due to the dependence on $D$. However as we argue, one can minimize the \textit{empirical} version of $\mirNF$ given access to a finite sample $S$ drawn i.i.d. from $D$ by solving the following \textit{weighted} regression problem:
$$ g_{x'} = \argmin_{g_{x'} \in \Glocal} \frac{1}{|S|}\sum_{i=1}^{|S|} (g_{x'}(x_i) - f(x_i))^2 p_{\mirN_{x_i}}(x')$$
\noindent To see what the empirical version of $\mirNF$ is, we can replace the outer expectation (over $x \sim D$) with the samples $S = \{x_i\}_{i=1}^{|S|}$, giving us 

\begin{align*} \text{Empirical $\MNF$} &= \frac{1}{|S|}\sum_{i=1}^{|S|} \mathbb{E}_{x' \sim \mirN_{x_i}} \left[ (g_{x'}(x_i) - f(x_i))^2 \right] \\
&= \frac{1}{|S|}\sum_{i=1}^{|S|} \int_{\mathcal{X}} (g_{x'}(x_i) - f(x_i))^2 p_{\mirN_{x_i}}(x') dx' \\ 
&= \frac{1}{|S|} \int_{\mathcal{X}} \sum_{i=1}^{|S|} (g_{x'}(x_i) - f(x_i))^2 p_{\mirN_{x_i}}(x') dx'
\end{align*}

To minimize the overall empirical $\mirNF$, one needs to choose $g_{x'}$ for each $x'$ such that it minimizes the above integrand, which is akin to performing one weighted least squares regression. Thus, one notable difference between minimizing empirical $\mirNF$ and $\NF$ is that we need to use real examples to fit $g_{x'}$ for $\mirNF$ but not for $\NF$ since the target distribution of interest there can be user-defined (i.e. it may be chosen such that it can be easily sampled from).

\subsection{Trade-offs between $\mirNF$ and $\NF$} 
We now discuss in further detail the comparison between $\mirNF$ and $\NF$, listing both the relative advantages and disadvantages of each.
It should be noted that this discussion is of a somewhat more exploratory nature; we do not aim to make definitive value judgments (i.e. one metric is always more useful than the other), but rather to provide a better qualitative understanding of how these two metrics might be expected to behave. 
We hope that this discussion prompts a more careful consideration of fidelity metrics in future works involving local explanations.

\subsubsection{Advantages of $\mirNF$}
In many practical situations (esp. for i.i.d. cases), it is reasonable to assume that practitioners will care significantly about generating explanations for predictions at \textit{realistic} on-distribution points and hoping that those (local) models correctly approximate what the model will do at nearby points which are also realistic. 
Our core argument for the usefulness of $\mirNF$ compared to $\NF$ is that it can be used to come closer to characterizing performance relative to the second part of this goal (i.e. predicting what the model will do at realistic points).

To reiterate Section \ref{sec:MNF}, this is an especially important concern for modern ML settings, which often involve significant feature dependencies (i.e. lower dimensional data manifolds) and models that behave unstably when extrapolating beyond the given task and training data. As we illustrate below in a toy example, when one uses $\NF$ with standard neighborhood choices (i.e. $N_x = \mathcal{N}(0,\sigma I)$), one may overemphasize the ability of explanations to fit this noisy behavior on regions that are off-manifold.

\textbf{Toy example.} We compare the abilities of $\mirNF$ and $\NF$ to serve as the basis for generating local explanations. In what follows, we refer to $g^{\NF}$ and $g^{\mirNF}$ as the explanations that minimize $\NF$ and $\mirNF$ respectively. We specifically consider a simple setup where the full input space has dimension $d=2$ but the data exists on a manifold of dimension $k=1$. Under task-distribution $D$, let $x_1 \sim \mathcal{N}(0,1)$ while $x_2 = 0$. Further consider the learned model $f(x) = x_1  - \beta x_1 x_2^2$, where one may assume $\beta \gg 0$. As an important note, on the task distribution $D$, $f(x) \equiv x_1$. 

\underline{Minimizing $\NF$:} To learn $g_x^{\NF}$, we may simply sample many $x' \sim N_x$ and find a linear $g^{\NF}_x(\cdot)$ that fits these points well. Now, we can expect this process to generalize in a way that $\mathbb{E}_{x' \sim N_x}[(g_x(x')-f(x'))^2]$ is minimized. In fact, one could consider the ideal scenario where we sample infinitely many unlabeled examples, and thus find the best possible linear approximation given this neighborhood distribution. However, observe that minimizing the above quantity provides absolutely \textit{no guarantee whatsoever} as far as the error committed on $D$ i.e., $\mathbb{E}_{x' \sim D}[(g_x(x')-f(x'))^2]$. This is because $D$ has zero measure. 
This means that by creating $f$ that is arbitrarily volatile along the irrelevant $x_2$ direction, we can force $g_x$ to be severely incorrect on $D$. Indeed, this is the case in the setting above. Let $g_x(x') = w_1 x_1' + w_2 x_2'$ and $N_x = N(0,I)$. Then, it can be shown that $\NF(f,g,x)$ is minimized by $w_1=1-\beta$. Since $\beta$ can be arbitrarily large, this explanation can be unboundedly arbitrarily poor at recovering a function equivalent to $f(x) \equiv x_1$ on $D$. 

\underline{Minimizing $\mirNF$:} Note that none of the above is a problem when we learn $g^{\mirNF}$, because we fit $g^{\mirNF}_x$ only on target points 
that are from the real data manifold. This will ensure that $g^{\mirNF}$ is in line with a potentially important desiderata for local explanations i.e., that they can faithfully capture a function that is accurate along the task-relevant data directions (of course, only upto a linear approximation). To illustrate more completely, recall that  $g^{\mirNF}$ is learned as follows: assuming access to 
$S = \{x_1, \dots, x_m\} \sim D^{m}$,
we have

$$ g_{x'}^{\mirNF} = \argmin_{g_{x'} \in \Glocal} \frac{1}{m}\sum_{i=1}^{m} (g_{x'}(x_i) - f(x_i))^2 p_{\mirN_{x_i}}(x')$$

Now since $S$ lies on the manifold of $D$, we have that $x_2=0$ on all those points. Therefore,  for each $x$,  we find the solution which minimizes 

\[
g_{x'}^{\mirNF} = \argmin_{w_1 \in \mathbb{R}} \frac{1}{m}\sum_{i=1}^{m}  (x_{i,1}-w_1x_{i,1} )^2p_{\mirN_{x_i}}(x')
\]

It is easy to see that with just two distinct datapoints from $S$, we would get $w_1 = 1$, which leads to perfect predictions for how the function behaves on $D$. 

As a remark, an even more natural version of the above setting would be one where $f$ is non-linear even on the data manifold. But even here we can still argue that $g^{\mirNF}$ would be close to the best possible linear function within the manifold up to a $1/\sqrt{m}$ error (e.g., a generlization bound like our Theorem \ref{thm:explanation-generalization} guarantees this on average over $x$). On the other hand, regardless of how many unlabeled datapoints we fit $g^{\NF}$ with, we would learn $g^{\NF}$ that can behave arbitrarily poorly on the manifold.


\subsubsection{Limitations of $\mirNF$ } Below, we discuss some limitations of $\mirNF$ as well as potential future directions for possibly addressing them. At a high-level, we believe while each represents a legitimate concern, they may arguably be (depending on context) ``reasonable prices to pay'' for the advantages of $\mirNF$ compared to $\NF$ described previously.

\textbf{$\mirNF$ explanations may lose local meaning:} Using $\mirNF$ to evaluate/generate explanations at low-probability source points $x'$ may have little to do with how $f$ actually behaves around $x'$. Because the target point distribution is  $x|x' \propto p_{D}(x)p_{\mathcal{N}_x}(x')$, very little probability mass might be placed in the vicinity around $x'$ when $p_{D}(x)$ is small. This would be the case when $x'$ is off-manifold or in low-density regions on the support of the real data distribution. The former might be dismissable if one cares about i.i.d. settings but the latter could be very important in applications where rare cases correspond to high-stakes decisions (e.g. disease diagnostics). In these scenarios, the explanation might still be too biased towards how the model is behaving at higher density regions. However, some potential future directions to remedy this are:
\begin{itemize}
    \item It might help to allow $\mirN_x$ to have smaller width around lower probability points from $D$ (allowing you to concentrate $\mirN_x$ around $x$ despite the form of $D$). It's remains a challenge to see how one would actually set these widths but it could be of help if a limit can be assumed on how quickly the value $p_{D}(x)$ can change around $x$.
    \item There also could be some use in considering a more general definition of $\mirNF$ that lets you choose an arbitrary outer distribution $x \sim \mathcal{Q}$ other than simply the task distribution $D$. That is, if one really cares about mutually consistent explanations in some arbitrary region (which could be on or off-manifold), then this would potentially allow one to able to customize a metric for that purpose. 
\end{itemize}

\textbf{Less intuitive target point neighborhoods: } Very closely related to the previous limitation, in interpreting $\mirNF$-based explanations, an end-user would have to understand that $g_{x'}$ are not exactly approximations for the locality around $x'$ but rather for the true target distribution that captures in some sense ``on-manifold points near $x'$ (modulated by the concentration of $\mirN_x$).'' This makes it harder for a user to know the exact window in which their explanation is directly valid for (compared to a user-specified target neighborhood for $\NF$). In practice, this shortcoming could be mitigated as long is it is carefully communicated to users that this limitation exists, i.e. they should focus on using $\mirNF$ explanations only at and for predicting what happens at realistic points. 
    
    \textbf{Unnaturalness of source points}: While $\mirNF$ does emphasize realistic target points, it also focuses on explanations generated at potentially off-manifold source points. Further, one could argue that the advantages of $\mirNF$ are partly because $N_x$ is chosen naively for $\NF$. For instance if one defined $N_x = N^{\dagger}_x$ in the definition for $\MNF$, then $g^{\NF}$ and $g^{\mirNF}$ would produce the same explanations because the inner target point expectations would be the same (comparing $\NF$ and the reversed expectation form of $\mirNF$). However, the average metric for $\NF$ seem more natural in an additional sense since it also only reflects caring about realistic \textit{source} points when looking at the outer expectation over $x \sim D$.
    
    $$\NF = \mathbb{E}_{x\sim D} \mathbb{E}_{x' \sim N^{\dagger}_x} \left[ [(g_x(x') - f(x')]^2\right]$$
    $$\mirNF = \mathbb{E}_{x'\sim D^{\dagger}} \mathbb{E}_{x \sim N^{\dagger}_{x'}} \left[ [(g_{x'}(x) - f(x)]^2\right]$$
    
    Given this, might $\mirNF$ be less interesting on its own?
    Using standard  ``naive" settings of $N_x$, one could argue that $\NF$ is also ``unnatural'' in that it takes into account how explanations at on-manifold source points perform at off-manifold target points. And though the above $\NF$ setting may be more ideal as a metric, it also becomes less clear how to evaluate it as the inner distribution cannot be sampled from easily. 
    On the other hand, we can use the original form of writing out $\mirNF$  (without the expecations flipped) to directly approximate $\mirNF$ with relevant samples from $D$.
    
    \textbf{Does not reflect what model causally depends on: } In the second toy-example, it was shown that if $f(\mathbf{x}) = x_1 - \beta x_1 x_2^2$ but the data manifold is $(x_1, x_2) = (x_1, 0)$, one could get arbitrarily poor fidelity and feature relevancy (for $x_1$) on this manifold using standard neighborhoods. But $\mirNF$ runs into a new problem when the feature set actually includes a highly correlated third feature: for example, consider $(x_1, x_2, x_3)$ where the manifold is defined by points $(x_1, x_2, x_3) = (x_1, 0, x_1)$. Thus according to $\mirNF$, $g(\mathbf{x}) = x_1$,  $g(\mathbf{x}) = x_3$, and indeed $g(\mathbf{x}) = -x_1 + 2x_3$ are all equally good explanations (because $\mirNF$ only cares about whether $g(\mathbf{x}) = f(\mathbf{x})$ on manifold). However, $f$ clearly only ``depends'' on $x_1$ for its decisions (in a causal sense). On the other hand, because $\NF$ samples target points both on and off manifold, it would correctly see that $x_3$ has no effect. The larger argument here is that in any conversation involving manifolds, one inherently is speaking about some sort of feature dependencies, which may similarly suffer from the same issues of not being causal w.r.t. $f$ and having non-identifiable explanations. On the other hand, we note that in the new toy-example, $\NF$ is not an ideal fix either because the cost is potentially an arbitrary coefficient for $x_1$ and extremely poor fidelity on $D$. More generally, finding ``what the model uses for its decision'' is simply not what $\mirNF$ explanations are trying to do. What one could describe $\mirNF$ as actually looking at is ``can I build a simpler local model relevant to the actual task at hand?''

\newpage \section{More on the disjointedness factor}

\subsection{Bounds}
Recall that the disjointedness factor is defined as $\rho_S := \int_{x' \in \mathcal{X}} \sqrt{ \frac{\sum_{j=1}^{m} (p_{\mirN_{x_i}}(x'))^2}{m}} dx'$. Here, we show that the disjointedness factor is bounded between $1$ and $\sqrt{m}$.

\begin{fact}
\label{fact:rho-bound}
The disjointedness factor $\rho_S$ satisfies $1\leq \rho_S \leq m$.
\end{fact}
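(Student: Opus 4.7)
The plan is to reduce both inequalities to pointwise inequalities on the integrand $\sqrt{\frac{1}{m}\sum_{i=1}^{m}(p_{\mirN_{x_i}}(x'))^2}$, integrate them over $\mathcal{X}$, and then exploit the single fact that each $p_{\mirN_{x_i}}$ is a probability density, so $\int_{\mathcal{X}} p_{\mirN_{x_i}}(x')\, dx' = 1$. Both directions are thus consequences of standard inequalities between power means of the non-negative numbers $a_i := p_{\mirN_{x_i}}(x')$.

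For the lower bound $\rho_S \geq 1$, I would invoke the quadratic-mean vs.\ arithmetic-mean inequality (equivalently Cauchy--Schwarz applied to the vectors $(a_1,\dots,a_m)$ and $(1,\dots,1)$): for non-negative reals, $\sqrt{\frac{1}{m}\sum_{i=1}^m a_i^2} \geq \frac{1}{m}\sum_{i=1}^m a_i$. Applying this pointwise in $x'$ and integrating yields
\[
\rho_S \;\geq\; \int_{\mathcal{X}} \frac{1}{m}\sum_{i=1}^{m} p_{\mirN_{x_i}}(x')\, dx' \;=\; \frac{1}{m}\sum_{i=1}^m \int_{\mathcal{X}} p_{\mirN_{x_i}}(x')\, dx' \;=\; 1,
\]
using Fubini/Tonelli (valid by non-negativity) and the fact that each density integrates to $1$.

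For the upper bound $\rho_S \leq m$, the key inequality in the other direction is that for non-negative reals, $\sum_{i=1}^m a_i^2 \leq \bigl(\sum_{i=1}^m a_i\bigr)^2$, so $\sqrt{\frac{1}{m}\sum_i a_i^2} \leq \frac{1}{\sqrt{m}}\sum_i a_i$. Applied pointwise and integrated, this gives
\[
\rho_S \;\leq\; \frac{1}{\sqrt{m}} \int_{\mathcal{X}} \sum_{i=1}^m p_{\mirN_{x_i}}(x')\, dx' \;=\; \frac{1}{\sqrt{m}} \cdot m \;=\; \sqrt{m},
\]
and since $m \geq 1$ we obtain $\rho_S \leq \sqrt{m} \leq m$ as claimed (this also recovers the tighter $\sqrt{m}$ bound referenced in the main text).

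There is no real obstacle beyond recognizing which power-mean inequality goes in which direction and verifying that the integrands are manageable: the integrand is non-negative and finite almost everywhere whenever the densities are, so swapping sum and integral is justified by Tonelli's theorem. The two extreme cases discussed in the main text (all $\mirN_{x_i}$ equal for the lower bound; all supports disjoint for the upper bound) show that both inequalities are tight, which is a useful sanity check.
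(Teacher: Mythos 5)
Your proof is correct and follows essentially the same route as the paper's: the QM--AM inequality for the lower bound and the $\ell_2 \leq \ell_1$ norm comparison for the upper bound, applied pointwise to the densities and integrated. You also correctly observe that the argument actually yields the tighter bound $\rho_S \leq \sqrt{m}$ (which is what the paper's own proof establishes and what the main text claims); the $\leq m$ in the Fact's statement appears to be a typo.
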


\begin{proof}
For the lower bound, we note that since the arithmetic mean lower bounds the quadratic mean, we have:

\begin{align*}
    \int_{x' \in \mathcal{X}} \sqrt{ \frac{\sum_{j=1}^{m} (p_{\mirN_{x_i}}(x'))^2}{m}} dx' &\geq
    \int_{x' \in \mathcal{X}}  \frac{\sum_{j=1}^{m} p_{\mirN_{x_i}}(x')}{m} dx' \\
    & \geq \sum_{j=1}^{m} \frac{1}{m}\int_{x' \in \mathcal{X}}  p_{\mirN_{x_i}}(x') dx' \\
    &  \geq \sum_{j=1}^{m} \frac{1}{m} = 1 \\
\end{align*}

For the upper bound, we make use of the fact that the $\ell_2$ norm of a vector is smaller than its $\ell_1$ norm to get:

\begin{align*}
    \int_{x' \in \mathcal{X}} \sqrt{ \frac{\sum_{j=1}^{m} (p_{\mirN_{x_i}}(x'))^2}{m}} dx' &\leq
    \int_{x' \in \mathcal{X}}  \frac{\sum_{j=1}^{m} p_{\mirN_{x_i}}(x')}{\sqrt{m}} dx' \\
    & \leq \sum_{j=1}^{m} \frac{1}{\sqrt{m}}\int_{x' \in \mathcal{X}}  p_{\mirN_{x_i}}(x') dx' \\
    &  \leq \sum_{j=1}^{m} \frac{1}{\sqrt{m}} = \sqrt{m} \\
\end{align*}

\end{proof}

\subsection{Values of $\rho_S$ in-between $1$ and $\sqrt{m}$}
\label{sec:in-between}

We know that the disjointedness factor $\rho_S$ takes the values $1$ and $\sqrt{m}$ in the two extreme cases where the neighborhoods are completely overlapping or disjoint respectively. We also know from Fact~\ref{fact:rho-bound} that the only other values it takes lie in between $1$ and $\sqrt{m}$. When does it take these values?

To get a sense of how these in-between values can be realized, we present a toy example here. Specifically,
we can show that under some simplistic assumptions, 
$\rho_S = \sqrt{m^{{1-k}}}$ (where $0 \leq k \leq 1$) if every neighborhood is just large enough to encompass a $\frac{1}{m^{1-k}}$ fraction of mass of the distribution $D$.

Our main assumption is that $\mirN_{x_i}$  is a uniform distribution over whatever support it covers. Further, to simplify the discussion, assume that $\mathcal{X}$ is a discrete set containing $M$ datapoints in total (think of $M$ as very, very large).

Then, if every neighborhood contains $\frac{1}{m^{1-k}}$ fraction of mass of the distribution $D$, it means it contains $\frac{M}{m^{1-k}}$ points in it. Therefore, since $\mirN_{x_i}$  is a uniform distribution, we have that the probability mass of $\mirN_{x_i}$  on any  point $x'$ in its support is 
$\frac{1}{Mm^{k-1}}$. Plugging this in the definition of $\rho_S$, we get:

\begin{align*}
    \rho_S = \int_{x' \in \mathcal{X}} \sqrt{\frac{1}{m}\sum_{i=1}^{m}(p_{\mirN_{x_i}}(x'))^2} dx' & = \sum_{j=1}^{M} \sqrt{\frac{1}{m}\sum_{i=1}^{m} \left(\Pr_{x'\sim \mirN_{x_i}}\left[x'= x_j \right]\right)^2} \\
    & = \sum_{j=1}^{M} \sqrt{\frac{1}{m}\sum_{i=1}^{m} \mathbb{I}[x_j \in \text{supp}\left(\mirN_{x_i}\right)] \left(\frac{1}{Mm^{k-1}} \right)^2} \\
    & = \sum_{j=1}^{M} \frac{1}{Mm^{k-0.5}} \sqrt{\sum_{i=1}^{m} \mathbb{I}[x_j \in \text{supp}\left(\mirN_{x_i}\right)]} \\
\end{align*}

To further simplify this, we need to compute the innermost summation, which essentially corresponds to the number of mirrored neighborhoods that each point belongs to. For simplicity, let's assume that every point belongs to $n$ neighborhoods. To estimate $n$, observe that for each of the $m$ neighborhoods to contain $\frac{M}{m^{1-k}}$ points, and for each of the $M$ points to be in $n$ neighborhoods, we must have:

\[
M n = m \frac{M}{m^{1-k}}.
\]

Thus, $n = m^k$. Plugging this back in, we get $\rho_S = m^{\frac{1-k}{2}}$.

\section{Piece-wise Global Approximation}
\label{sec:piecewise}

\subsubsection{Generalization bound assuming piecewiseness}
\label{sec:piecewise}
We now discuss the Rademacher complexity of a simpler class of local-approximation functions: a class of piecewise-simple functions $g \in \mathcal{G}$ with $K$ pieces.
In particular, one can show that the complexity of these functions grows with $K$ as $\sqrt{K}$. 

To see why, first let us call the $K$ regions that $g$ is defined over as $R_1, \dots, R_K$. Correspondingly, the original training set $S = \{x_i\}_{i}^{m} $ can be divided into the subsets $S_1 = \{x_{1,i}\}_{i=1}^{m_1}, \dots, S_k= \{x_{K,i}\}_{i=1}^{m_K}$ and the pieces of $g$ are $g_1, \dots, g_K \in \Glocal$ are simple functions. Then, one can split the Rademacher complexity over the whole dataset in terms of these subsets, to get:
    
    \begin{align*}
        \hat{\mathcal{R}}_S(\mathcal G) &= \mathbb{E}_{\sigma}\left[\sup_{g \in \mathcal G} \frac{1}{m} \sum_{i=1}^{m} \sigma_i g(x_i) \right] \\
        &= \mathbb{E}_{\sigma}\left[\sup_{g \in \mathcal G} \sum_{k=1}^{K} \frac{m_k}{m} \sum_{i=1}^{m} \frac{1}{m_k}\sigma_i g_j(x_i) \mathbb{I} \{x_i \in S_j \} \right] \\
        &= \mathbb{E}_{\sigma}\left[\sup_{g \in \mathcal G} \sum_{k=1}^{K} \frac{m_k}{m} \sum_{i=1}^{m_k} \frac{1}{m_k}\sigma_{k,i} g_j(x_{k,i}) \right] \\
        &\leq\sum_{k=1}^{K} \frac{m_k}{m}  \mathbb{E}_{\sigma}\left[\sup_{g_j \in \tilde {\mathcal G}}\frac{1}{m_k}\sigma_{k,i} g_j(x_{k}) \right] \\
        &\leq \sum_{k=1}^{K} \frac{m_k}{m}  \hat{\mathcal{R}}_{S_k}({\mathcal G}_{\text{local}})
    \end{align*}

Now, assuming each $\hat{\mathcal{R}}_{S_k}({\mathcal G}_{\text{local}})$ is  $\mathcal{O}\left(\frac{1}{\sqrt{m_k}}\right)$,  and assuming each subset as the same number of points $m_k = m/K$, the sum in the last expression can be bounded as $\mathcal{O}\left(\sqrt{\frac{K}{m}}\right)$.

\section{Proofs}
\label{sec:proofs}

Below, we present the full statement and proof of Lemma~\ref{lem:rademacher} which bounds the Rademacher complexity of $\mathcal{G}$. The main difference between this statement and the  version in the main paper is that we replace the Rademacher complexity of $\Glocal$ with a slightly more carefully defined version of it defined below:

\begin{equation}
\hat{\mathcal{R}}^*_S(\Glocal) := \max_{i \leq m} \max_{T \subseteq S, |T|=i} \hat{\mathcal{R}}_{T}(\Glocal) \sqrt{\frac{i}{m}}
\label{eq:corrected-rademacher}
\end{equation}

This quantity is essentially a bound on the empirical Rademacher complexity of $\Glocal$ on all possible subsets of $S$, with an appropriate scaling factor. 

We note that although this quantity is technically larger than the original quantity namely $\hat{\mathcal{R}}_S(\Glocal)$, for all practical purposes, it is reasonable to think of $\hat{\mathcal{R}}^*_S(\Glocal)$ as being identical to $\hat{\mathcal{R}}_S(\Glocal)$ modulo some constant factor. For example, if we have that for all $h \in \Glocal$, $h(x) = w \cdot x$ where $\| w\|_2 \leq \alpha$, then one would typically bound $\hat{\mathcal{R}}_S(\Glocal)$ by $O\left(\frac{\alpha \sqrt{{\sum_{i=1}^{m} \| x_i\|_2^2}/{m}}}{\sqrt{m}}\right)$. The bound on $\hat{\mathcal{R}}^*_S(\Glocal)$ however would resolve to $O\left(\frac{\alpha \sqrt{\max_{i \leq m} \| x_i\|_2^2}}{\sqrt{m}}\right)$. Now, as long as we assume that $\| x_i\|$ are all bounded by some constant, both these bounds are asymptotically the same, and have the same $1/\sqrt{m}$ dependence on $m$. Additionally, we also remark that that it is possible to write our results in terms of tighter definitions of $\hat{\mathcal{R}}^*_S(\Glocal)$, however our statements read much cleaner with the above definition. 

\begin{lemma} \textbf{(full, precise statement of Lemma~\ref{lem:rademacher})}
\label{lem:rademacher-full}
Let $L(\cdot,y')$ be a $c$-Lipschitz function w.r.t. $y'$ in that for all $y_1, y_2 \in [-B,B]$, $|L(y_1,y') - L(y_2, y')| \leq c |y_1-y_2|$. Let $S = \{(x_1,y_1), \hdots, (x_m,y_m) \} \in \mathcal{X}^m$. Then, the empirical Rademacher complexity of $\mathcal{G}$ under the loss function $L$ is defined and bounded as:
  \[
  \hat{\mathcal{R}}_S(L \circ \mathcal{G}) := \mathbb{E}_{\vec{\sigma}}\left[ \sup_{g \in {\mathcal{G}}} \frac{1}{m} \sum_{i}^{m} \sigma_i \mathbb{E}_{x' \sim \mirN_{x_i}}[L(g_{x'}(x_i),y_i)] \right] \leq  c \rho_S   (\ln m +1) \cdot \hat{\mathcal{R}}^*_{S}(\Glocal).
  \]
  where recall that $\rho_S := \int_{x' \in \mathcal{X}} \sqrt{ \frac{\sum_{j=1}^{m} (p_{\mirN_{x_i}}(x'))^2}{m}} dx'$ is the disjointedness factor.
\end{lemma}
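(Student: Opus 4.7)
The plan is to bound the Rademacher complexity of the randomized class by reducing it to a (non-random) weighted Rademacher complexity over $\Glocal$, which, after integrating over source points $x'$, produces the disjointedness factor $\rho_S$. First I would write the inner expectation as an integral, $\mathbb{E}_{x' \sim \mirN_{x_i}}[L(g_{x'}(x_i), y_i)] = \int p_{\mirN_{x_i}}(x') L(g_{x'}(x_i), y_i)\, dx'$. Because $g_{x'}$ is a pointwise-unconstrained element of $\Glocal$ as $x'$ varies, $\sup_g \int \le \int \sup_{g_{x'}\in\Glocal}$, and Fubini lets us swap $\mathbb{E}_\sigma$ with the $x'$-integral. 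The problem then reduces to controlling, for each fixed $x'$,
\[
\Phi(x') := \mathbb{E}_\sigma\left[\sup_{h \in \Glocal} \frac{1}{m}\sum_i \sigma_i w_i(x') L(h(x_i), y_i)\right], \qquad w_i(x') := p_{\mirN_{x_i}}(x'),
\]
and then integrating $\Phi$ in $x'$.

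For each $x'$, the maps $t \mapsto w_i(x') L(t, y_i)$ are $c\,w_i(x')$-Lipschitz, so Talagrand's contraction principle (after centering out the constants $L(0, y_i)$, which vanish under $\mathbb{E}_\sigma$) yields $\Phi(x') \le (c/m)\,\mathbb{E}_\sigma[\sup_h \sum_i \sigma_i w_i(x') h(x_i)]$. To handle the resulting weighted Rademacher average, I would sort the weights $w_{(1)} \ge \cdots \ge w_{(m)}$ and use the Abel-type identity $w_{(i)} = \sum_{j\ge i}(w_{(j)}-w_{(j+1)})$ (with $w_{(m+1)}=0$). Swapping the order of summation and pulling the supremum inside gives
\[
\sup_h \sum_i \sigma_i w_{(i)} h(x_{(i)}) \le \sum_{j=1}^m (w_{(j)} - w_{(j+1)})\sup_h \sum_{i \le j} \sigma_i h(x_{(i)}).
\]
The expected inner supremum equals $j\,\hat{\mathcal{R}}_{S_{\le j}}(\Glocal) \le \sqrt{jm}\,\hat{\mathcal{R}}^*_S(\Glocal)$ by definition of $\hat{\mathcal{R}}^*_S$. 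A second Abel rearrangement followed by Cauchy--Schwarz then gives $\sum_j (w_{(j)} - w_{(j+1)})\sqrt{j} = \sum_j w_{(j)}(\sqrt{j}-\sqrt{j-1}) \le \sum_j w_{(j)}/\sqrt{j} \le \sqrt{(\ln m + 1)\sum_i w_i(x')^2}$. Combining these pieces and dividing by $m$ yields $\Phi(x') \le c\,\sqrt{\ln m+1}\,\hat{\mathcal{R}}^*_S(\Glocal)\,\sqrt{\frac{1}{m}\sum_i w_i(x')^2}$, and integrating in $x'$ uses the identity $\int \sqrt{\frac{1}{m}\sum_i w_i(x')^2}\, dx' = \rho_S$ to deliver the claimed bound (the stated $(\ln m + 1)$ factor easily absorbs the $\sqrt{\ln m + 1}$).

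The main obstacle I anticipate is this weighted-Rademacher step: the weights $w_i(x')$ are highly non-uniform across both $i$ and $x'$, so one cannot simply pull out a common scale factor and invoke the standard Rademacher complexity of $\Glocal$ on the full sample $S$. The Abel/chaining device is essential because it converts the weighted sum into an $\ell_2$-type expression in the weights; this $\ell_2$ structure is precisely what survives the outer $x'$-integral and produces the sharp disjointedness factor $\rho_S$ (a naive $\ell_1$-style bound would instead collapse to $\int \sum_i w_i(x')/m\, dx' = 1$, losing all dependence on neighborhood overlap). This is also why the sharper notion $\hat{\mathcal{R}}^*_S(\Glocal)$ appears in the full statement: we need uniform control of $\hat{\mathcal{R}}_{T}(\Glocal)$ for every subset $T \subseteq S$ with the appropriate $\sqrt{|T|/m}$ rescaling, not merely for $T = S$.
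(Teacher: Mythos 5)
Your proposal is correct and follows essentially the same route as the paper's proof: swap the supremum with the integral over source points, apply contraction, decompose the sorted weights via an Abel/telescoping argument into nested-subset Rademacher averages controlled by $\hat{\mathcal{R}}^*_{S}(\Glocal)$, and use Cauchy--Schwarz against the harmonic sum to extract $\sqrt{\tfrac{1}{m}\sum_i w_i(x')^2}$, whose integral over $x'$ is exactly $\rho_S$. The only cosmetic differences are that the paper phrases the outer integral as an expectation under an auxiliary importance distribution $\tilde{D}$ (chosen at the end so that its normalizing constant is $\rho_S$) rather than integrating the raw densities directly, and that it performs the telescoping before the contraction step rather than after.
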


Our high level proof idea is to first construct a distribution $\tilde{D}$ over $\mathcal{X}$ in a way that each the inner expectations over $\mirN_{x_i}$ (for each $i$) can be rewritten as an expectation over $x' \sim \tilde{D}$. This removes the dependence on $i$ from this expectation, which then allows us to pull this expectation all the way out. This further allows us to take each $x'$ and compute a Rademacher complexity corresponding to the loss of $g_{x'}$, and then finally average that complexity over $x' \sim \tilde{D}$. 

\begin{proof}
We begin by noting that the inner expectations in the Rademacher complexity are over $m$ unique distributions $\mirN_{x_i}$. our first step is to rewrite these expectations in a way that they all apply on the same distribution. Let us call this distribution $\tilde{D}$ and define what it is later. As long as $\tilde{D}$ has a support that contains the support of the above $m$ distributions, we can write:
\begin{align*}
\hat{\mathcal{R}}_{S}(L\circ \mathcal{G}) &= \mathbb{E}_{\vec{\sigma}}\left[ \sup_{g \in {\mathcal{G}}} \frac{1}{m} \sum_{i}^{m} \sigma_i \mathbb{E}_{x' \sim \tilde{D}}\left[L(g_{x'}(x_i),y_i) \frac{p_{\mirN_{x_i}}(x')}{p_{\tilde{D}}(x')}\right] \right]\\
\intertext{this allows us to pull the inner expectation in front of the supremum (which makes this an inequality now):}
& \leq \mathbb{E}_{\vec{\sigma}}\left[  \mathbb{E}_{x' \sim \tilde{D}}\left[ \sup_{g \in {\mathcal{G}}} \frac{1}{m} \sum_{i}^{m} \sigma_i L(g_{x'}(x_i),y_i) \frac{p_{\mirN_{x_i}}(x')}{p_{\tilde{D}}(x')}\right] \right]\\
\intertext{which further allows us rewrite the supremum to be over $\Glocal$ instead of ${\mathcal{G}}$:}
& \leq \mathbb{E}_{\vec{\sigma}}\left[  \mathbb{E}_{x' \sim \tilde{D}}\left[ \sup_{h \in \Glocal} \frac{1}{m} \sum_{i}^{m} \sigma_i L(h(x_i),y_i) \frac{p_{\mirN_{x_i}}(x')}{p_{\tilde{D}}(x')}\right] \right]\\
\intertext{ next, let us simply interchange the two outer expectations and rewrite it as:}
& \leq \mathbb{E}_{x' \sim \tilde{D}} \left[ \mathbb{E}_{\vec{\sigma}} \left[ \sup_{h \in \Glocal} \frac{1}{m} \sum_{i}^{m} \sigma_i L(h(x_i),y_i) \frac{p_{\mirN_{x_i}}(x')}{p_{\tilde{D}}(x')}\right] \right].\\
\end{align*}

What we now have is an inner expectation which boils down to an empirical Rademacher complexity for a fixed $x'$, and an outer expectation that averages this over $x' \sim \tilde{D}$. For the rest of the discussion, we will fix $x'$ and focus on bounding the inner term. For convenience, let us define $w_i := \frac{p_{\mirN_{x_i}}(x')}{p_{\tilde{D}}(x')}$.
 Without loss of generality,  assume that $w_1 \leq w_2 \leq \hdots \leq w_m$. Also define $w_0 := 0$. We then begin by expanding $w_i$ into a telescopic summation:


\begin{align*}
  \mathbb{E}_{\vec{\sigma}} \left[ \sup_{h \in \Glocal} \frac{1}{m} \sum_{i=1}^{m} \sigma_i L(h(x_i),y_i) w_i \right] & = 
  \mathbb{E}_{\vec{\sigma}} \left[ \sup_{h \in \Glocal} \frac{1}{m} \sum_{i=1}^{m} 
  \sigma_i L(h(x_i),y_i) \sum_{j=1}^{i} (w_j-w_{j-1}) \right]\\
  \intertext{then, we interchange the two summations while adjusting their limits appropriately:} 
  &  = \mathbb{E}_{\vec{\sigma}}\left[  \sup_{h \in \Glocal} \frac{1}{m} \sum_{j=1}^{m} \sum_{ i=j}^{m} \sigma_i L(h(x_i),y_i) (w_j - w_{j-1}) \right]\\
  \intertext{and we pull out the outer summation in front of the supremum and expectation, making it an upper bound:}
&  \leq \sum_{j=1}^{m} \mathbb{E}_{\vec{\sigma}}\left[  \sup_{h \in \Glocal} \frac{1}{m} \sum_{i=j}^{m} \sigma_i L(h(x_i),y_i) (w_j - w_{j-1}) \right].\
\end{align*}

Intuitively, the above steps have executed the following idea. The Rademacher complexity on the LHS can be thought of as involving a dataset with weights $w_1, w_2, \hdots, w_m$ given to the losses on each of the $m$ datapoints. We then imagine decomposing this ``weighted'' dataset into multiple weighted datasets while ensuring that the weights summed across these datasets equal $w_1, w_2, \hdots, w_m$ on the respective datapoints. Then, we could compute the Rademacher complexity for each of these datasets, and then sum them up to get an upper bound on the complexity corresponding to the original dataset. 

The way we decomposed the datasets is as follows: first we extract a $w_1$ weight out of all the $m$ data points (which is possible since it's the smallest weight), giving rise to a dataset of $m$ points all with equal weights $w_1$. What remains is a dataset with weights $0, w_2 - w_1, w_3 - w_1, \hdots, w_m - w_1$. From this, we'll extract a $w_2 - w_1$ weight out of all but the first data point to create a dataset of $m-1$ datapoints all equally weighted as $w_{2} - w_1$. By proceeding similarly, we can generate $m$ such datasets of cardinality $m$, $m-1$, $\hdots$, $1$ respectively, such that all datasets have equally weighted points, and the weights follow the sequence $w_1-w_0, w_2-w_1$, $\hdots$ and so on. As stated before, we will eventually sum up Rademacher complexity terms computed with respect to each of these datasets.

Now, we continue simplifying the above term by pulling out $(w_j - w_{j-1})$ since it is only a constant:
 \begin{align*}
   \mathbb{E}_{\vec{\sigma}} \left[ \sup_{h \in \Glocal} \frac{1}{m} \sum_{i=1}^{m} \sigma_i L(h(x_i),y_i) w_i \right] &  \leq \sum_{j=1}^{m} (w_j - w_{j-1}) \mathbb{E}_{\vec{\sigma}}\left[  \sup_{h \in \Glocal} \frac{1}{m} \sum_{i=j}^{m} \sigma_i L(h(x_i),y_i)  \right]\\
\intertext{next, we apply the standard contraction lemma (Lemma~\ref{lem:contraction}) to make use of the fact $h(x_i)$ is composed with a $c$-Lipschitz function to get:}
&  \leq c \sum_{j=1}^{m} (w_j - w_{j-1}) \mathbb{E}_{\vec{\sigma}}\left[  \sup_{h \in \Glocal} \frac{1}{m} \sum_{i=j}^{m} \sigma_i h(x_i)  \right]\\
\intertext{using $S_{j:m}$ to denote the datapoints indexed from $j$ to $m$, we can rewrite this in short as:}
&  \leq c \sum_{j=1}^{m} (w_j - w_{j-1}) \frac{m+1-j}{m}\hat{\mathcal{R}}_{S_{j:m}}(\Glocal)\\
\intertext{and finally, we make use of the definition of $\mathcal{R}_S^*(\Glocal)$ in Equation~\ref{eq:corrected-rademacher} to get:}
&  \leq c \sum_{j=1}^{m} (w_j - w_{j-1})\frac{\sqrt{m+1-j}}{\sqrt{m}}\hat{\mathcal{R}}^*_{S}(\Glocal) .\\
 \end{align*}

What remains now is to simplify the summation over $w$'s. To do this, we rearrange the telescopic summation as follows:
\begin{align*}
  \sum_{j=1}^{m} (w_j - w_{j-1}) \sqrt{m+1-j} &=  \sum_{j=1}^{m} w_j (\sqrt{m+1-j}-\sqrt{m-j}) \\
  & =  \sum_{j=1}^{m} w_j \cdot \frac{1}{\sqrt{m+1-j} + \sqrt{m-j}} \\
  & \leq \sum_{j=1}^{m} w_j \frac{1}{\sqrt{m+1-j}} \\
  & \leq  \sqrt{\sum_{j=1}^{m} w_j^2} \cdot \sqrt{\sum_{j=1}^{m}\frac{1}{j}} \\
  & \leq   \sqrt{\sum_{j=1}^{m} w_j^2} \cdot  (\ln m +1)
\end{align*}

Note that in the penultimate step we've used the Cauchy-Schwartz inequality and in the last step, we have made use of the standard logarithmic upper bound on the $m$-th harmonic number. Plugging this back on the Rademacher complexity bound, we get:

\begin{align*}
\hat{\mathcal{R}}_{S}(L\circ \mathcal{G}) &\leq
 \mathbb{E}_{x' \sim \tilde{D}} \left[ c \sqrt{\sum_{j=1}^{m}w^2_{j}} \cdot  (\ln m +1) \cdot \frac{\hat{\mathcal{R}}^*_{S}(\Glocal)}{\sqrt{m}}\right]\\
 \intertext{plugging in the values of $w_j$, we get:}
 &\leq
 \mathbb{E}_{x' \sim \tilde{D}} \left[ c  \sqrt{ \frac{\sum_{j=1}^{m} (p_{\mirN_{x_i}}(x'))^2}{(p_{\tilde{D}}(x'))^2}} \cdot  (\ln m +1) \cdot \frac{\hat{\mathcal{R}}^*_{S}(\Glocal)}{\sqrt{m}}\right].\\
 & \leq c \mathbb{E}_{x' \sim \tilde{D}} \left[ \sqrt{ \frac{\sum_{j=1}^{m} \frac{(p_{\mirN_{x_i}}(x'))^2}{m}}{(p_{\tilde{D}}(x'))^2}}\right]   (\ln m +1) \cdot \hat{\mathcal{R}}^*_{S}(\Glocal) .\\
\end{align*}

Now we finally set $\tilde{D}$ such that $p_{\tilde{D}}(x') = \frac{\sqrt{\sum_{j=1}^{m} \frac{(p_{\mirN_{x_i}}(x'))^2}{m}}}{\rho_S}$ where $\rho_S$ is a normalization constant such that
$\rho_S = \int_{x' \in \mathcal{X}} \sqrt{\sum_{j=1}^{m} \frac{(p_{\mirN_{x_i}}(x'))^2}{m}} dx'$.
Then, the above term would simplify as:
\begin{align*}
\hat{\mathcal{R}}_{S}(L\circ \mathcal{G}) & \leq c \mathbb{E}_{x' \sim \tilde{D}} \left[\rho_S\right]   (\ln m +1) \cdot \hat{\mathcal{R}}^*_{S}(\Glocal) \\ 
& \leq c \rho_S   (\ln m +1) \cdot \hat{\mathcal{R}}^*_{S}(\Glocal).
\end{align*}
\end{proof}

Next, we state and prove the full version of Theorem~\ref{thm:performance-generalization} which provided a generalization guarantee for the test error of $f$ in terms of its local interpretability.

\begin{theorem} 
\label{thm:performance-generalization-full}
\textbf{(full, precise version of Theorem~\ref{thm:performance-generalization})}
With probability over $1-\delta$ over the draws of $S = \{(x_1,y_1), \hdots, (x_m,y_m) \} \sim D^m$, for all $f \in \mathcal{F}$ and for all $g \in \mathcal{G}$, we have (ignoring $\ln 1/\delta$ factors):
\begin{align*}
     \mathbb{E}_{(x,y) \sim D}[(f(x)-y)^2] & \leq \frac{4}{m}\sum_{i=1}^{m} (f(x_i)-y_i)^2 + 2 \underbrace{\mathbb{E}_{x \sim D}[\mathbb{E}_{x' \sim \mirN_x} \left[ (f(x) - g_{x'}(x))^2\right]]}_{\mirNF(f,g)} \\
     & + \frac{4}{m}\sum_{i=1}^{m} \underbrace{\mathbb{E}_{x' \sim \mirN_x} \left[ (f(x_i) - g_{x'}(x_i))^2\right]}_{\mirNF(f,g,x_i)} +
     16B\rho_S \hat{\mathcal{R}}^*_S(\Glocal) (\ln m+1) \\
     & + 2\sqrt{\frac{\ln 1/\delta}{m}},
\end{align*}
where $\rho_S$ denotes the \textbf{disjointedness factor} defined as $\rho_S := \int_{x' \in \mathcal{X}} \sqrt{\frac{1}{m}\sum_{i=1}^{m}(p_{\mirN_{x_i}}(x'))^2} dx'$
and $\hat{\mathcal{R}}^*_S(\Glocal)$ is defined in Equation~\ref{eq:corrected-rademacher}.
\end{theorem}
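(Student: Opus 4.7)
The overall strategy is to use $g$ as a local surrogate for $f$ so that the complexity term in the final bound involves $\mathcal{G}$ (via Lemma~\ref{lem:rademacher-full}) rather than $\mathcal{F}$. The first step is to apply the elementary inequality $(a-b)^2\leq 2(a-c)^2+2(c-b)^2$ with $a=f(x)$, $b=y$, $c=g_{x'}(x)$, then take expectation over $(x,y)\sim D$ and $x'\sim \mirN_x$. This yields
\[
 \mathbb{E}_{(x,y)\sim D}[(f(x)-y)^2] \leq 2\,\mirNF(f,g) + 2\,\mathbb{E}_{(x,y)\sim D}\mathbb{E}_{x'\sim \mirN_x}[(g_{x'}(x)-y)^2],
\]
peeling off the test MNF term and isolating the population risk of the randomized predictor $x\mapsto g_{x'}(x)$ where $x'\sim \mirN_x$.

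The second step is to bound this randomized risk via standard Rademacher-based uniform convergence. Viewing $(x,y)\mapsto \mathbb{E}_{x'\sim \mirN_x}[(g_{x'}(x)-y)^2]$ as a deterministic $[0,4B^2]$-bounded function of $(x,y)$, McDiarmid combined with the usual symmetrization argument gives, with probability $\geq 1-\delta$ and uniformly over $g\in\mathcal{G}$,
\[
 \mathbb{E}_{(x,y)}\mathbb{E}_{x'\sim \mirN_x}[(g_{x'}(x)-y)^2] \leq \tfrac{1}{m}\sum_i \mathbb{E}_{x'\sim \mirN_{x_i}}[(g_{x'}(x_i)-y_i)^2] + 2\,\hat{\mathcal{R}}_S(L\circ \mathcal{G}) + O\!\Bigl(\sqrt{\tfrac{\ln(1/\delta)}{m}}\Bigr).
\]
Because the squared loss is $4B$-Lipschitz on $[-B,B]$, Lemma~\ref{lem:rademacher-full} bounds $\hat{\mathcal{R}}_S(L\circ \mathcal{G})$ by $4B\,\rho_S\,\hat{\mathcal{R}}^*_S(\Glocal)(\ln m+1)$.

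The third step re-expresses the resulting empirical risk in terms of $f$. Applying $(a-b)^2\leq 2(a-c)^2+2(c-b)^2$ once more, now with $a=g_{x'}(x_i)$, $b=y_i$, $c=f(x_i)$, and taking expectation over $x'\sim \mirN_{x_i}$ before averaging over $i$, gives
\[
 \tfrac{1}{m}\sum_i \mathbb{E}_{x'\sim \mirN_{x_i}}[(g_{x'}(x_i)-y_i)^2] \leq \tfrac{2}{m}\sum_i \mirNF(f,g,x_i) + \tfrac{2}{m}\sum_i (f(x_i)-y_i)^2.
\]
Chaining the three inequalities and tracking constants---the factor of $2$ from step one multiplies both the Rademacher slack and this last decomposition---produces the $4/m$ coefficients and the $2\cdot 2\cdot 4B = 16B$ prefactor on the Rademacher term in the claimed bound.

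The main conceptual point worth flagging is the uniform-convergence step: one must verify that McDiarmid--symmetrization applies to the \emph{integrated} loss class $\{(x,y)\mapsto \mathbb{E}_{x'\sim \mirN_x}[(g_{x'}(x)-y)^2] : g\in\mathcal{G}\}$, whose Rademacher complexity is precisely what Lemma~\ref{lem:rademacher-full} bounds (and is why that lemma is stated for the randomized loss $\mathbb{E}_{x'\sim \mirN_{x_i}}[L(g_{x'}(x_i),y_i)]$ rather than a pointwise evaluation of $g_{x'}$). Once this is accepted, the remainder of the argument is a mechanical chaining of two AM--QM decompositions around a single uniform-convergence bound, with no additional technical obstacles.
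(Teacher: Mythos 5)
Your proposal is correct and follows essentially the same route as the paper's own proof: the same two AM--QM decompositions around $g_{x'}(x)$ and $f(x_i)$, the same uniform-convergence step on the integrated loss class bounded via Lemma~\ref{lem:rademacher-full} with Lipschitz constant $4B$, and the same constant bookkeeping yielding the $4/m$ coefficients and the $16B$ prefactor. No gaps.
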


\begin{proof} 
First, we split the test error into two terms by introducing the $g$ function as follows:

\begin{align*}
\mathbb{E}_{(x,y) \sim D}[(f(x)-y)^2] & = \mathbb{E}_{(x,y) \sim D}[\mathbb{E}_{x' \sim \mirN_x}[(f(x)-y)^2]] \\
& \leq  2 \left(\mathbb{E}_{x \sim D}[\mathbb{E}_{x' \sim \mirN_x}[(f(x)-g_{x'}(x))^2]] 
+  \mathbb{E}_{x \sim D}[\mathbb{E}_{x' \sim \mirN_x}[(g_{x'}(x) - y)^2]] \right) \numberthis \label{eq:1}
\end{align*}

In the first step, we have introduced a dummy expectation over $x'$, and in the next step, we have used the following inequality: for any $a,b,c \in \mathbb{R}$, $(a-b)^2 \leq (|a-c| + |c-b|)^2 \leq 2(|a-c|^2 + |c-b|^2)$ (the first inequality in this line is the triangle inequality and the second inequality is the root mean square inequality). 

The first term on the RHS above is $\mirNF(f,g)$. To simplify the second term, we first apply a generalization bound based on Rademacher complexity. Specifically, we have that w.h.p $1-\delta$ over the draws of $S$, for all $g \in \mathcal{G}$,

\begin{align*}
    \mathbb{E}_{x \sim D}[\mathbb{E}_{x' \sim \mirN_x}[(g_{x'}(x) - y)^2]]  \leq \frac{1}{m}\sum_{i=1}^{m} \mathbb{E}_{x' \sim \mirN_{x_i}}[(g_{x'}(x_i) - y_i)^2] + 2\hat{\mathcal{R}}_{S}(\mathcal{G}) + \sqrt{\frac{\ln 1/\delta}{m}} \numberthis \label{eq:2}
\end{align*}
Now, $\hat{\mathcal{R}}_{S}(\mathcal{G})$ can be bounded using Lemma~\ref{lem:rademacher} under Lipschitzness of the squared error loss. Specifically, we have that for $h, h' \in \Glocal$, and for all $y \in [-B,B]$, $|(h(x) - y)^2 - (h'(x) - y)^2| \leq 4B|h(x) - h'(x)|$, since all of $h(x), h'(x)$ and $y$ lie in $[-B,B]$. Therefore, from Lemma~\ref{lem:rademacher} we have that:

\begin{align*}\hat{\mathcal{R}}_{S}(\mathcal{G}) \leq 4B(\ln m +1) \rho_S \hat{\mathcal{R}}^*_{S}(\Glocal). \numberthis \label{eq:3}
\end{align*}

The only term that remains to be bounded is the first term on the RHS. This can bounded again using the inequality that for any $a,b,c \in \mathbb{R}$, $(a-b)^2 \leq (|a-c| + |c-b|)^2 \leq 2(|a-c|^2 + |c-b|^2)$:

\begin{align*}
    \frac{1}{m}\sum_{i=1}^{m} \mathbb{E}_{x' \sim \mirN_{x_i}}[(g_{x'}(x_i) - y_i)^2)] \leq   \frac{2}{m}\sum_{i=1}^{m} \mathbb{E}_{x' \sim \mirN_{x_i}}[(g_{x'}(x_i) - f(x_i))^2] + \frac{2}{m}\sum_{i=1}^{m} (f(x_i) - y_i)^2 \numberthis \label{eq:4}
\end{align*}

By combining the above three chains of inequalities, we get the final bound.
\end{proof}

Below, we present an alternative version of Theorem~\ref{thm:performance-generalization} where the generalization bound does not involve the test $\mirNF$ and hence does not require any unlabeled data from $D$; however the bound is not on the test error of $f$ but the test error of $g$.

\begin{theorem} 
\label{thm:performance-generalization-full-2}
\textbf{(an alternative version of Theorem~\ref{thm:performance-generalization})}
With probability over $1-\delta$ over the draws of $S = \{(x_1,y_1), \hdots, (x_m,y_m) \} \sim D^m$, for all $f \in \mathcal{F}$ and for all $g \in \mathcal{G}$, we have:
\begin{align*}
     \mathbb{E}_{(x,y) \sim D}[\mathbb{E}_{x' \sim \mirN_{x}}[(g_{x'}(x)-y)^2]] & \leq \frac{2}{m}\sum_{i=1}^{m} (f(x_i)-y_i)^2 + \frac{2}{m}\sum_{i=1}^{m} \underbrace{\mathbb{E}_{x' \sim \mirN_x} \left[ (f(x_i) - g_{x'}(x_i))^2\right]}_{\mirNF(f,g,x_i)} \\
     &+ 
     8B\rho_S \hat{\mathcal{R}}_S(\Glocal) (\ln m+1) + \sqrt{\frac{\ln 1/\delta}{m}}.
\end{align*}
\end{theorem}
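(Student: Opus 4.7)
The plan is to follow the same structure as the proof of Theorem~\ref{thm:performance-generalization-full}, but skip the initial step that decomposed the test error of $f$ into test $\mirNF$ plus the expected loss of the randomized function $g$. Since the left-hand side here is already the expected loss of $g$ itself (rather than of $f$), no such introduction of $g$ via triangle inequality is needed, which is precisely why both the factor-of-$2$ blow-up and the test-$\mirNF$ term disappear from the bound.

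First, I would apply a standard Rademacher-based uniform convergence bound to the class $\mathcal{G}$ composed with the squared-error loss. This gives, with probability at least $1-\delta$ over $S \sim D^m$, uniformly over $g \in \mathcal{G}$,
\[
\mathbb{E}_{(x,y) \sim D}\!\left[\mathbb{E}_{x' \sim \mirN_{x}}[(g_{x'}(x)-y)^2]\right] \leq \frac{1}{m}\sum_{i=1}^{m} \mathbb{E}_{x' \sim \mirN_{x_i}}[(g_{x'}(x_i)-y_i)^2] + 2\hat{\mathcal{R}}_S(\mathcal{G}) + \sqrt{\frac{\ln 1/\delta}{m}}.
\]
Note that this is precisely the step used as Equation~\eqref{eq:2} in the previous proof, applied here directly to the quantity we wish to bound.

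Next, I would invoke Lemma~\ref{lem:rademacher} to control $\hat{\mathcal{R}}_S(\mathcal{G})$. For this, I need the Lipschitz constant of the squared-error loss with respect to its first argument over the range $[-B, B]$. Using the identity $(a-y)^2 - (b-y)^2 = (a-b)(a+b-2y)$ and the bound $|a+b-2y| \leq 4B$ for $a, b, y \in [-B,B]$, we see the loss is $4B$-Lipschitz. Hence Lemma~\ref{lem:rademacher} gives $\hat{\mathcal{R}}_S(\mathcal{G}) \leq 4B \rho_S \hat{\mathcal{R}}_S(\Glocal) (\ln m + 1)$, contributing the $8B\rho_S \hat{\mathcal{R}}_S(\Glocal)(\ln m+1)$ term after multiplying by the factor of $2$ from above.

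Finally, I would relate the empirical training term to the two target quantities by inserting $f(x_i)$ inside the square and applying the pointwise inequality $(a-b)^2 \leq 2(a-c)^2 + 2(c-b)^2$, i.e.
\[
(g_{x'}(x_i)-y_i)^2 \leq 2(g_{x'}(x_i)-f(x_i))^2 + 2(f(x_i)-y_i)^2.
\]
Taking expectation over $x' \sim \mirN_{x_i}$ and averaging over $i$ yields train $\mirNF$ plus empirical squared error of $f$, exactly matching Equation~\eqref{eq:4} from the earlier proof. Chaining the three displayed inequalities gives the stated bound. I do not expect any real obstacle here; every ingredient has already been assembled in the proof of Theorem~\ref{thm:performance-generalization-full}, and the only subtlety is the observation that targeting the expected loss of the randomized $g$ directly avoids both the doubling and the unlabeled-test-data dependence that appeared previously.
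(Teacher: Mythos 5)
Your proposal is correct and matches the paper's own proof, which simply instantiates the argument of Theorem~\ref{thm:performance-generalization-full} starting from Equation~\ref{eq:2}: the uniform convergence bound on $L\circ\mathcal{G}$, the $4B$-Lipschitz contraction via Lemma~\ref{lem:rademacher}, and the decomposition of the empirical term via $(a-b)^2\leq 2(a-c)^2+2(c-b)^2$. Your observation that bounding the loss of $g$ directly removes both the factor-of-two blow-up and the test-$\mirNF$ term is exactly the point of this alternative version.
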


\begin{proof}
The proof follows directly from the proof of Theorem~\ref{thm:performance-generalization-full-2} starting from Equation~\ref{eq:2}.
\end{proof}

We now state and prove the full version of Theorem~\ref{thm:explanation-generalization} which provided a generalization guarantee for the quality of explanations.
\begin{theorem}
\label{thm:explanation-generalization-full} (full, precise statement of Theorem~\ref{thm:explanation-generalization})
For a fixed function $f$, with high probability $1-\delta$ over the draws of $S \sim D^m$, for all $g \in \mathcal{G}$, we have:
\begin{align*}
\underbrace{\mathbb{E}_{x \sim D} \left[ \mathbb{E}_{x' \sim \mirN_{x}} \left[ (f(x)-g_{x'}(x))^2\right]\right]}_{\text{test } \mirNF \text{ i.e., } \mirNF(f,g)} & \leq \underbrace{\frac{1}{m}\sum_{i=1}^{m} \mathbb{E}_{x' \sim \mirN_{x}} \left[ (f(x_i)-g_{x'}(x_i))^2\right]}_{ \text{train } \mirNF} \\
& + 8B\rho_S \mathcal{R}_S(\Glocal) \ln m   + \sqrt{\frac{\ln 1/\delta}{m}}.
\end{align*}
where $\hat{\mathcal{R}}^*_S(\Glocal)$ is defined in Equation~\ref{eq:corrected-rademacher}.
\end{theorem}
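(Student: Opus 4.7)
}
Since $f$ is fixed, the plan is to view the target quantity as the expectation, over $x \sim D$, of a per-sample loss
\[
\ell(g,x) := \mathbb{E}_{x' \sim \mirN_{x}}\!\left[(f(x)-g_{x'}(x))^2\right],
\]
and then apply a standard Rademacher-based uniform convergence bound to the induced loss class $\ell \circ \mathcal{G} = \{x \mapsto \ell(g,x) : g \in \mathcal{G}\}$. The crux of the work is already done in Lemma~\ref{lem:rademacher-full}, which bounds $\hat{\mathcal{R}}_S(L \circ \mathcal{G})$ for a generic $c$-Lipschitz loss $L$; here the entire remaining argument is to wrap this lemma inside a concentration step and track the constants arising from the squared loss being bounded and Lipschitz.

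First I would observe that since $f(x), g_{x'}(x) \in [-B,B]$, we have $\ell(g,x) \in [0, 4B^2]$ for every $g \in \mathcal{G}$ and every $x$. This bounded-range property together with the sample $S \sim D^m$ permits invoking the standard Rademacher generalization theorem (McDiarmid plus symmetrization): with probability at least $1-\delta$, simultaneously for all $g \in \mathcal{G}$,
\[
\mathbb{E}_{x \sim D}[\ell(g,x)] \;\leq\; \frac{1}{m}\sum_{i=1}^{m} \ell(g,x_i) \;+\; 2\,\hat{\mathcal{R}}_S(\ell \circ \mathcal{G}) \;+\; \sqrt{\frac{\ln 1/\delta}{m}},
\]
where constants involving $B$ have been absorbed into the Rademacher and concentration terms in the same style as Theorem~\ref{thm:performance-generalization-full}.

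Next I would instantiate Lemma~\ref{lem:rademacher-full} with the loss $L(y_1,y_2) = (y_1-y_2)^2$, treating $y_2 = f(x_i)$ as the (fixed) ``label''. Since $|y_1^2 - y_2^2| = |y_1+y_2||y_1-y_2| \leq 2B|y_1-y_2|$ for $y_1,y_2 \in [-B,B]$ and the squared loss is a quadratic expansion in that form, one gets that $L(\cdot, y')$ is $c$-Lipschitz with $c = 4B$. Applying Lemma~\ref{lem:rademacher-full} then yields
\[
\hat{\mathcal{R}}_S(\ell \circ \mathcal{G}) \;\leq\; 4B\,\rho_S\,(\ln m + 1)\,\hat{\mathcal{R}}^*_S(\Glocal).
\]
Plugging this into the concentration step and combining the factor of $2$ from Rademacher symmetrization with the $4B$ from the Lipschitz constant yields the claimed $8B\,\rho_S\,\hat{\mathcal{R}}^*_S(\Glocal)\,\ln m$ term.

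The main obstacle is essentially cosmetic: there is no deep step remaining after Lemma~\ref{lem:rademacher-full}, but one must take minor care that (i) the per-sample quantity $\ell(g,x_i)$ is an expectation over the internal $x'$-randomness, so the McDiarmid bounded-differences argument uses only the independent $x_i$'s (the $\mirN_{x_i}$-randomness is marginalized before concentration is applied), and (ii) the Lipschitz constant for the squared loss restricted to $[-B,B]$ is correctly identified as $4B$ rather than a naive $2B$, so that the final $8B$ constant matches. Beyond those accounting details, the proof is a direct composition of Lemma~\ref{lem:rademacher-full} with the textbook Rademacher generalization theorem.
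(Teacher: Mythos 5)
Your proposal is correct and follows essentially the same route as the paper: the paper's own proof is exactly to treat $f$ as a fixed labeling function, apply the standard Rademacher uniform convergence bound to the loss class, and invoke Lemma~\ref{lem:rademacher-full} with the squared loss's Lipschitz constant $4B$ on $[-B,B]$ (as in Equation~\ref{eq:3}), yielding the $8B\rho_S\hat{\mathcal{R}}^*_S(\Glocal)(\ln m+1)$ term. Your accounting of the constants and of the marginalization over the internal $x'$-randomness before concentration matches what the paper leaves implicit.
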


\begin{proof}
For this result, we need to think of $f$ as a fixed labeling function since it is independent of the dataset $S$ that is used to train $g$. Then, one can apply a standard Rademacher complexity bound and invoke Lemma~\ref{lem:rademacher} to get the final result (as invoked in Equation~\ref{eq:3}). 
\end{proof}

Below, we state the standard contraction lemma for Rademacher complexity. The lemma states that composing a function class with a $c$-Lipschitz function can scale up its Rademacher complexity by a multiplicative factor of atmost $c$.
\begin{lemma}
\label{lem:contraction}
\textbf{(Contraction lemma)}
For each $i = 1, 2, \hdots, m$, let $\phi_i: \mathbb{R} \to \mathbb{R}$ be a $c$-Lipschitz function in that for all $t, t' \in \mathcal{B} \subseteq \mathbb{R}$, $|\phi_i(t) - \phi_i(t')| \leq |t-t'|$. Then, for any class $\mathcal{H}$ of functions $h: \mathbb{R} \to \mathcal{B}$, we have:

\[
\mathbb{E}_{\vec{\sigma}}\left[ \sum_{i=1}^{m} \sigma_i \phi_i(h(x_i)) \right] \leq 
c \mathbb{E}_{\vec{\sigma}}\left[ \sum_{i=1}^{m} \sigma_i (h(x_i) \right].
\]
\end{lemma}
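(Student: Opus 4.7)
The natural approach is the classical Ledoux--Talagrand symmetrization/contraction argument, applied one coordinate at a time. I would proceed by induction on the number of Lipschitz terms: at step $i$, I condition on all Rademacher variables except $\sigma_i$ and show that replacing $\phi_i$ by $c\cdot\mathrm{id}$ inside the supremum can only increase the conditional expectation. Iterating over $i=1,\ldots,m$ then peels off every $\phi_i$ and yields the claim. Writing $A(h)$ for the (fixed) contribution of the other coordinates once we condition on the remaining $\sigma_j$, the problem at each inductive step reduces to establishing
\[
\mathbb{E}_{\sigma_i}\!\left[\sup_{h\in\mathcal{H}}\bigl(A(h)+\sigma_i\phi_i(h(x_i))\bigr)\right]\;\leq\; c\,\mathbb{E}_{\sigma_i}\!\left[\sup_{h\in\mathcal{H}}\bigl(A(h)+\sigma_i h(x_i)\bigr)\right].
\]

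For this single-coordinate step, I would expand the expectation over $\sigma_i\in\{-1,+1\}$ as an average of two terms, then combine the two suprema by introducing a second dummy variable $h'\in\mathcal{H}$. Concretely, the left side equals
\[
\tfrac{1}{2}\sup_{h,h'\in\mathcal{H}}\bigl[A(h)+A(h')+\phi_i(h(x_i))-\phi_i(h'(x_i))\bigr].
\]
Lipschitzness gives $\phi_i(h(x_i))-\phi_i(h'(x_i))\leq c\,|h(x_i)-h'(x_i)|$. The crucial move is then a symmetrization: since $A(h)+A(h')$ is invariant under swapping $h\leftrightarrow h'$, the supremum of this expression with the absolute value equals the supremum of $A(h)+A(h')+c(h(x_i)-h'(x_i))$ (whichever sign of $h(x_i)-h'(x_i)$ achieves the supremum, relabeling $h\leftrightarrow h'$ turns it into the positive orientation). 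Recognizing the resulting symmetric expression as $2\,\mathbb{E}_{\sigma_i}[\sup_h(A(h)+c\sigma_i h(x_i))]$ closes the induction step.

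The main obstacle is the symmetrization step that removes the absolute value; it is easy to slip up here, so I would handle it carefully via the explicit case analysis above rather than by an opaque appeal to symmetry. A minor technical nuisance is that when $\mathcal{H}$ is infinite the suprema may not be attained, so formally one should take $\varepsilon$-optimizers $(h_\varepsilon,h'_\varepsilon)$ and let $\varepsilon\to 0$; this is standard and does not change the argument. Finally, I note that the bound as written in the statement actually needs a $\sup_{h\in\mathcal{H}}$ inside each expectation (without it the inequality is vacuous, since the expectation of $\sigma_i h(x_i)$ for a fixed $h$ is zero); the proof above produces precisely the inequality with these suprema, which is how the lemma is invoked in the proof of Lemma~\ref{lem:rademacher-full}.
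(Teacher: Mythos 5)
The paper does not prove this lemma; it is quoted as the standard Ledoux--Talagrand contraction lemma, so there is no in-paper argument to compare against. Your proof is the classical one --- peel off one Lipschitz composition per coordinate by conditioning on the other Rademacher signs, expand $\mathbb{E}_{\sigma_i}$ into two suprema, couple them with a dummy $h'$, apply Lipschitzness, and remove the absolute value by the $h\leftrightarrow h'$ swap --- and the argument is sound, including the $\varepsilon$-optimizer caveat. You are also right that the statement only makes sense with a $\sup_{h\in\mathcal{H}}$ inside each expectation (as it is used in Lemma~\ref{lem:rademacher-full}); note additionally that the paper's Lipschitz hypothesis as printed omits the factor $c$ on the right of $|\phi_i(t)-\phi_i(t')|\leq |t-t'|$.

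One imprecision worth fixing: the displayed single-coordinate reduction, with $c$ multiplying the entire right-hand expectation, is not the inequality your derivation establishes, and it is false in general --- take $A(h)\equiv -M$ for large $M$ and $c>1$, and the right side is roughly $-cM$ while the left is roughly $-M$. The inequality you actually reach at the end of the step, $\mathbb{E}_{\sigma_i}\bigl[\sup_{h}\bigl(A(h)+\sigma_i\phi_i(h(x_i))\bigr)\bigr]\leq \mathbb{E}_{\sigma_i}\bigl[\sup_{h}\bigl(A(h)+c\,\sigma_i h(x_i)\bigr)\bigr]$, keeps $c$ attached only to $h(x_i)$; that is the version you must iterate, since the misstated form would both rescale the already-processed terms inside $A$ and compound to $c^m$. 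After all $m$ coordinates are replaced by $c\cdot\mathrm{id}$, the constant factors out of the full sum (and hence the supremum and expectation) to give the claimed $c\,\mathbb{E}_{\vec{\sigma}}\bigl[\sup_{h}\sum_{i=1}^{m}\sigma_i h(x_i)\bigr]$.
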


\section{Experiment Details}
\label{experiments-appendix}
\subsection{Procedure for calculating $\rho_S$}

As a reminder, we define $\rho_S$ to be an integral over $\mathcal{X}$, which is not trivial to evaluate in practice, especially in higher dimensions. 
 \[
\rho_S = \int_{x' \in \mathcal{X}} \sqrt{\frac{1}{m}\sum_{i=1}^{m}(p_{\mirN_{x_i}}(x'))^2} dx'
\]
\noindent Common numerical integration techniques usually incur significant computational costs due to the dimension of $x$. Though a variety of methods exist, one can intuit this blow-up by considering the naive approach of simply constructing a Riemann sum across a rectangular meshgrid of points in $\mathcal{X}$. If one wants to create a grid of $c$ points per dimension, then $c^d$ points (and thus evaluations of the integrand) must be processed. \\

\noindent Instead, we can apply Monte-Carlo Integration to evaluate $\rho_S$. As we will see, a key feature of this approach is that error will \textit{not} scale with data dimension and can be bounded probabilistically via a Hoeffding bound. Currently, the integral does not look like an expectation so we must introduce a dummy distribution $q(x')$ as follows
$$
\rho_S  = \int_{x' \in \mathcal{X}} \frac{\sqrt{\frac{1}{m}\sum_{i=1}^{m}(p_{\mirN_{x_i}}(x'))^2}}{q(x')}q(x') dx'= \mathbb{E}_{x' \sim q} \left[ \frac{\sqrt{\frac{1}{m}\sum_{i=1}^{m}(p_{\mirN_{x_i}}(x'))^2}}{q(x')} \right]$$
\noindent Now, we can estimate $\rho_S$ with $n$ independent samples from $q$.

$$\hat{\rho}_{S,n} = \frac{1}{n} \sum_{j=1}^n \frac{\sqrt{\frac{1}{m}\sum_{i=1}^{m}(p_{\mirN_{x_i}}(x'_j))^2}}{q(x'_j)}$$

\noindent This is an unbiased estimate of $\rho_S$, but that in itself is not sufficient. This is only a feasible approach if we can choose $q$ such that (1) we can actually sample from it, (2) we can calculate $q(x')$ for arbitrary $x'$ and (3) we can control the variance of  $\frac{\sqrt{\frac{1}{m}\sum_{i=1}^{m}(p_{\mirN_{x_i}}(x'))^2}}{q(x')}$. \\

\noindent It can be shown by choosing $q$ to be a uniform mixture of the $m$ training set neighborhoods, we can satisfy all 3 properties. (1) and (2) are dependent on those same properties being satisfied by $\mirN_x$. If $\mirN_x$ can be sampled from, the mixture over $m$ such distributions can obviously be sampled from. The same goes for calculating the density, which in this case is:
$$ q(x')  = \sum_{i=1}^m \frac{1}{m} \cdot p_{\mirN_{x_i}}(x') = \frac{1}{m} \sum_{i=1}^m  p_{\mirN_{x_i}}(x') $$
\noindent We observe that (3) can also be shown because we can upper and lower bound the quantity in question. To show this, we first re-write it as

\begin{align*}
\frac{\sqrt{\frac{1}{m}\sum_{i=1}^{m}(p_{\mirN_{x_i}}(x'))^2}}{q(x')} &= \frac{\sqrt{\frac{1}{m}\sum_{i=1}^{m}(p_{\mirN_{x_i}}(x'))^2}}{\frac{1}{m} \sum_{i=1}^m \cdot p_{\mirN_{x_i}}(x')} \\
&= \sqrt{m}\frac{\sqrt{\sum_{i=1}^{m}(p_{\mirN_{x_i}}(x'))^2}}{\sum_{i=1}^m \cdot p_{\mirN_{x_i}}(x')} \\ 
&= \sqrt{m}\frac{||p_{S}(x')||_2}{||p_{S}(x')||_1}
\end{align*}

\noindent where $p_S(x')$ is a $m$-dimensional vector of densities each evaluated at $x'$ (i.e. one for each of the $m$ training points). Since $||x||_2 \leq ||x||_1 \leq \sqrt{m} ||x||_2$, the upper and lower bounds for this quantity are $\sqrt{m}$ and $1$ respectively. Thus we can bound the variance of this quantity by $\frac{1}{4}(\sqrt{m}-1)^2 \leq \frac{m}{4} $ and $\text{Var}(\hat{\rho}_{S,n}) \leq \frac{m}{4n}$. This does not scale with dimension but only the number of training points!

\noindent To be even more concrete, for a given $m$ and $n$, we can now apply a Hoeffding bound to control the error. 
$$ \mathbb{P}(|\hat{\rho}_{S,n} - \rho_S| > t) \leq 2e^{\frac{-2nt^2}{m}}$$

In our experiments we choose $n$ to be $10m$, meaning that the probability that $\rho_S$ is off by more than 0.5 is capped at about $1\%$ (recall that $\rho_S$ scales from $[1, \sqrt{m}]$. 

\subsection{Full set of results}

\begin{figure}[h]
    \centering \includegraphics[width=0.3\textwidth]{figures/autompgs.png} \includegraphics[width=0.3\textwidth]{figures/housing.png} \includegraphics[width=0.3\textwidth]{figures/wine.png} \includegraphics[width=0.3\textwidth]{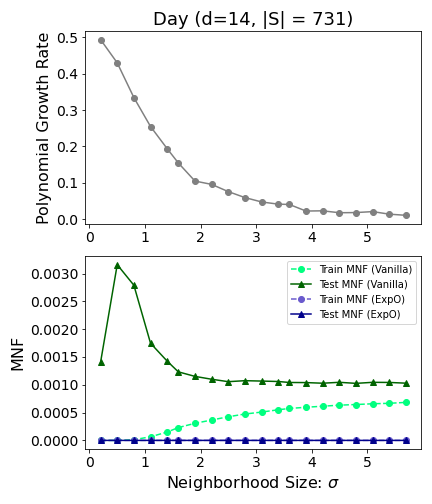} \includegraphics[width=0.3\textwidth]{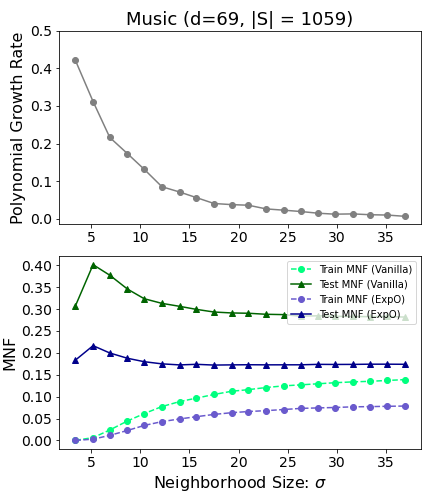} 
    \caption{ Approximate exponent of $\rho_S$'s polynomial growth rate (top) and train/test $\mirNF$ (below) plotted for various neighborhood widths across several UCI datasets. 
    }
    \label{fig:experiments_full}
\end{figure} 

\


\end{document}